\DeclarePairedDelimiter\norm{\lVert}{\rVert}%
\newcommand{\textdiff}[1]{\textcolor{red}{#1}}
\newcommand{\normt}[1]{\left\lVert#1\right\rVert_2}
\newcommand{\normtmu}[1]{\left\lVert#1\right\rVert_{2, \mu}}
\newcommand{\normtt}[1]{\left\lVert#1\right\rVert^2_2}
\newcommand{\defeq}{\mathrel{\stackrel{\makebox[0pt]{\mbox{\normalfont\tiny def}}}{=}}}
\newcommand{\argmax}[1]{\underset{#1}{\textrm{argmax}}\ }
\newcommand{\argmin}[1]{\underset{#1}{\textrm{argmin}}\ }
\newcommand{\Projmu}{\Pi_\mu}
\newcommand{\trans}{T}
\newcommand{\backup}{\mathcal{T}}
\newcommand{\Qclass}{\mathcal{Q}}
\newcommand{\ReplayBuffer}{\mathcal{B}}
\newcommand{\ltwonorm}{\ell^2}
\newcommand{\linfnorm}{\ell^\infty}
\newtheorem{theorem}{Theorem}[section]
\newtheorem{corollary}{Corollary}[theorem]
\begin{document}

\twocolumn[
\icmltitle{Diagnosing Bottlenecks in Deep Q-learning Algorithms}



\icmlsetsymbol{equal}{*}

\begin{icmlauthorlist}
\icmlauthor{Justin Fu}{equal,to}
\icmlauthor{Aviral Kumar}{equal,to}
\icmlauthor{Matthew Soh}{to}
\icmlauthor{Sergey Levine}{to}
\end{icmlauthorlist}

\icmlaffiliation{to}{Berkeley AI Research, University of California, Berkeley}

\icmlcorrespondingauthor{Justin Fu}{\texttt{justinjfu@eecs.berkeley.edu}}
\icmlcorrespondingauthor{Aviral Kumar}{\texttt{aviralk@berkeley.edu}}


\vskip 0.3in
]



\printAffiliationsAndNotice{\icmlEqualContribution}  


\begin{abstract}
Q-learning methods represent a commonly used class of algorithms in reinforcement learning: they are generally efficient and simple, and can be combined readily with function approximators for deep reinforcement learning (RL). However, the behavior of Q-learning methods with function approximation is poorly understood, both theoretically and empirically. In this work, we aim to experimentally investigate potential issues in Q-learning, by means of a "unit testing" framework where we can utilize oracles to disentangle sources of error. 
Specifically, we investigate questions related to function approximation, sampling error and nonstationarity, and where available, verify if trends found in oracle settings hold true with modern deep RL methods.
We find that large neural network architectures have many benefits with regards to learning stability; offer several practical compensations for overfitting; and develop a novel sampling method based on explicitly compensating for function approximation error that yields fair improvement on high-dimensional continuous control domains. 
\end{abstract}

\section{Introduction}
Q-learning algorithms, which are based on approximating state-action value functions, are an efficient and commonly used class of RL methods. In recent years, such methods have been applied to great effect in domains such as playing video games from raw pixels~\citep{Mnih2015} and continuous control in robotics~\citep{kalashnikov18}. Methods based on approximate dynamic programming and Q-function estimation have several very appealing properties: they are generally moderately sample-efficient, when compared to policy gradient methods, they are simple to use, and they allow for off-policy learning. This makes them an appealing choice for a wide range of tasks, from robotic control~\citep{kalashnikov18} to off-policy learning from historical data for recommender~\citep{shani2005recommender} systems and other applications. However, although the basic tabular Q-learning algorithm is convergent and admits theoretical analysis~\cite{suttonrlbook}, its non-linear counterpart with function approximation (such as with deep neural networks) is poorly understood theoretically. 
In this paper, we aim to investigate the degree to which the theoretical issues with Q-learning actually manifest in practice. 
Thus, we empirically analyze aspects of the Q-learning method in a \emph{unit testing} framework, where we can employ oracle solvers to obtain ground truth Q-functions and distributions for exact analysis. We investigate the following questions:

\textbf{1) What is the effect of function approximation on convergence?}
Most practical reinforcement learning problems, such as robotic control, require function approximation to handle large or continuous state spaces. However, the behavior of Q-learning methods under function approximation is not well understood. There are known counterexamples where the method diverges~\citep{Baird1995}, and there are no known convergence guarantees~\citep{suttonrlbook}. 
To investigate these problems, we study the convergence behavior of Q-learning methods with function approximation, parametrically varying the function approximator power and analyzing the quality of the solution as compared to the optimal Q-function and the optimal projected Q-function under that function approximator. 
{We find, somewhat surprisingly, that function approximation error is not a major problem in Q-learning algorithms, but only when the representational capacity of the function approximator is high. This makes sense in light of the theory: a high-capacity function approximator can perform a nearly perfect projection of the backed up Q-function, thus mitigating potentially convergence issues due to an imperfect $\ell_2$ norm projection. We also find that divergence rarely occurs, for example, we observed divergence in only 0.9\% of our experiments. We discuss this further in Section~\ref{sec:function_approx}.}

\textbf{2) What is the effect of sampling error and overfitting?}
Q-learning is used to solve problems where we do not have access to the transition function of the MDP. Thus, Q-learning methods need to learn by collecting samples in the environment, and training on these samples incurs sampling error, potentially leading to overfitting. This causes errors in the computation of the Bellman backup, which degrades the quality of the solution. {We experimentally show that overfitting exists in practice by performing ablation studies on the number of gradient steps, and by demonstrating that oracle based early stopping techniques can be used to improve performance of Q-learning algorithms.}  (Section~\ref{sec:overfitting}).
Thus, in our experiments we quantify the amount of overfitting which happens in practice, incorporating a variety of metrics, an performing a number of ablations and investigate methods to mitigate its effects.

\textbf{3) What is the effect of distribution shift and a moving target?}
The standard formulation of Q-learning prescribes an update rule, with no corresponding objective function~\citep{Sutton09b}. This results in a process which optimizes an objective that is non-stationary in two ways: the target values are updated during training, and the distribution under which the Bellman error is optimized changes, as samples are drawn from different policies. We refer to these problems as the \emph{moving target} and \emph{distribution shift} problems, respectively. These properties can make convergence behavior difficult to understand, and prior works have hypothesized that nonstationarity is a source of instability~\citep{Mnih2015, Lillicrap2015}. {In our experiments, we develop metrics to quantify the amount of distribution shift and performance change due to non-stationary targets. Surprisingly, we find that in a controlled experiment, distributional shift and non-stationary targets do not in fact correlate with reduction in performance. In fact, sampling strategies with large distributional shift often perform very well.}

\textbf{4) What is the best sampling or weighting distribution?}
Deeply tied to the distribution shift problem is the choice of which distribution to sample from. Do moving distributions cause instability, as Q-values trained on one distribution are evaluated under another in subsequent iterations?
Researchers have often noted that on-policy samples are typically superior to off-policy samples~\citep{suttonrlbook}, and there are several theoretical results that highlight favorable convergence properties under on-policy samples. However, there is little theoretical guidance on how to pick distributions so as to maximize learning rate. To this end, we investigate several choices for the sampling distribution. {Surprisingly, we find that on-policy training distributions are not always preferable, and that a clear pattern in performance with respect to training distribution is that broader, higher-entropy distributions perform better, regardless of distributional shift. Motivated by our findings, we propose a novel weighting distribution, adversarial feature matching (AFM), which is explicitly compensates for function approximator error, while still producing high-entropy sampling distributions.}

Our contributions are as follows:
We introduce a unit testing framework for Q-learning to disentangle issues related to function approximation, sampling, and distributional shift where approximate components are replaced by oracles. This allows for controlled analysis of different sources of error. We perform a detailed experimental analysis of many hypothesized sources of instability, error, and slow training in Q-learning algorithms on tabular domains, and show that many of these trends hold true in high dimensional domains. We propose novel choices of sampling distributions which lead to improved performance even on high-dimensional tasks. Our overall aim is to offer practical guidance for designing RL algorithms, as well as to identify important issues to solve in future research.

\section{Preliminaries}
\label{sec:backrgound}
Q-learning algorithms aim to solve a Markov decision process (MDP) by learning the optimal state-action value function, or Q-function. We define an MDP as a tuple $(\mathcal{S}, \mathcal{A}, \trans, R, \gamma)$. $\mathcal{S}, \mathcal{A}$ represent the state and action spaces, respectively. $\trans(s' | s, a)$ and $R(s,a)$ represent the dynamics (transition distribution) and reward function, and $\gamma \in (0,1)$ represents the discount factor. The goal in RL is to find a policy $\pi(a|s)$ that maximizes the expected cumulative discounted rewards, known as the \textit{returns}:
\[ \pi^* = \argmax{\pi} E_{s_{t+1} \sim \trans(\cdot|s_t, a_t), a_t \sim \pi(\cdot|s_t)}\left[\sum_{t=0}^\infty \gamma^t R(s_t, a_t)\right] \]
The quantity of interest in Q-learning methods are state-action value functions, which give the expected future return starting from a particular state-action tuple, denoted $Q^\pi(s,a)$. The state value function can also be denoted as $V^\pi(s,a)$. Q-learning algorithms are based on iterating the Bellman backup operator $\backup$, defined as
\[(\backup Q)(s, a) = R(s, a) + \gamma E_{s' \sim \trans}[V(s')]\]
\[V(s) = \max_{a'} Q(s, a')\]
The (tabular) Q-iteration algorithm is a dynamic programming algorithm that iterates the Bellman backup $Q^{t+1} \leftarrow \backup Q^t$. Because the Bellman backup is a $\gamma$-contraction in the L-$\infty$ norm, and $Q^*$ (the Q-values of $\pi^*$) is its fixed point, Q-iteration can be shown to converge to $Q^*$~\citep{suttonrlbook}. A deterministic optimal policy can then be obtained as $\pi^*(s) = \argmax{a} Q^*(s,a)$.

When state spaces cannot be enumerated in a tabular format, function approximators can be used to represent the Q-values. An important class of such Q-learning methods are \textit{fitted Q-iteration} (FQI)~\citep{Ernst05}, or approximate dynamic programming (ADP) methods, which form the basis of modern deep RL methods such as DQN~\citep{Mnih2015}.
FQI projects the values of the Bellman backup onto a family of Q-function approximators $\Qclass$:
\[ Q^{t+1} \leftarrow \Projmu(\backup Q^t) .\]
Here, $\Projmu$ denotes a $\mu$-weighted L2 projection, which minimizes the \textit{Bellman error} via supervised learning:
\begin{equation}
\label{eqn:bellman_projection} 
\Projmu(Q) \defeq 
\argmin{Q' \in \Qclass} E_{s,a \sim \mu}[(Q'(s,a) - Q(s,a))^2]
 .\end{equation}
The values produced by the Bellman backup, $(\backup Q^t)(s,a)$ are commonly referred to as \textit{target values}, and when neural networks are used for function approximation, the previous Q-function $Q^t(s,a)$ is referred to as the \textit{target network}. In this work, we distinguish between the cases when the Bellman error is estimated with Monte-Carlo sampling or computed exactly (see Section~\ref{sec:setup_algos}). The sampled variant corresponds to FQI as described in the literature~\citep{Ernst05,Riedmiller2005}, while the exact variant is analogous to conventional ADP~\citep{Bertsekas96}. 

Convergence guarantees for Q-iteration do not cleanly translate to FQI. $\Projmu$ is an $\ltwonorm$ projection, but $\backup$ is a contraction in the $\linfnorm$ norm -- this norm mistmatch means the composition of the backup and projection is no longer guaranteed to be a contraction under any norm~\citep{Bertsekas96}, and hence the convergence is not guaranteed.

A related branch of Q-learning methods are \textit{online Q-learning} methods,
in which Q-values are updated while samples are being collected in the MDP. This includes classic algorithms such as Watkin's Q-learning~\citep{Watkins1992}. Online Q-learning methods can be viewed as a form of stochastic approximation (such as Robbins-Monro) applied to Q-iteration and FQI~\citep{Bertsekas96}, and share many of its theoretical properties~\citep{szepesvari1998asymptotic}.
Modern deep RL algorithms such as DQN~\citep{Mnih2015} have characteristics of both online Q-learning and FQI -- using replay buffers means the sampling distribution $\mu$ changes very little between target updates (see Section~\ref{sec:distr_shift}), and target networks are justified from the viewpoint of FQI. Because FQI corresponds to the case when the sampling distribution is static between target updates, the behavior of modern deep RL methods more closely resembles FQI than a true online method without target networks.

\section{Experimental Setup}
\label{sec:setup}
Our experimental setup is centered around \emph{unit-testing}. We first introduce a spectrum of Q-learning algorithms, starting with exact approximate dynamic programming and gradually replacing oracle components, such as knowledge of dynamics, until the algorithm resembles modern deep Q-learning methods. We then introduce a suite of tabular environments where oracle solutions can be computed and compared against, to aid in diagnosis, as well as testing in high-dimensional environments to verify our hypotheses.

In order to provide consistent metrics across domains, we normalize returns and errors involving Q-functions (such as Bellman error) by the returns of the expert policy $\pi^*$ on each environment.

\subsection{Algorithms}
\label{sec:setup_algos}
{In the analysis presented in Section \ref{sec:function_approx}, \ref{sec:overfitting}, \ref{sec:analysis_nonstationarity} and \ref{sec:sampling_distributions}, we will use three different Q-learning variants, each of which remove some of the approximations in the standard Q-learning method used in the literature} -- 
Exact-FQI, Sampling-FQI, and Replay-FQI. Although FQI is not exactly identical to commonly used deep RL methods, such as DQN~\cite{Mnih2015}, DDPG~\cite{Lillicrap2015}, and SAC~\cite{Haarnoja2017}, it is structurally similar and, when the replay buffer for the commonly used methods becomes large, the difference becomes negligible, since the sampling distribution changes very little between target network updates. However, FQI methods are much more amenable for controlled analysis, since we can separately isolate target values, update rates, and the number of samples used for each iteration. We therefore use variants of FQI as the basis for our analysis, but we also confirm that similar trends hold with more commonly used algorithms on standard benchmark problems.

\begin{figure*}[ttt!]
\begin{small}
\begin{minipage}[t]{0.33\linewidth}
\begin{algorithm}[H]
\small
\caption{Exact-FQI}
\label{alg:fqiexact}
\begin{algorithmic}[1]
    \STATE Initialize Q-value approximator $Q_\theta(s,a)$.
    \FOR{step $t$ in \{1, \dots, N\}}
        \item[]
        \item[]
        \item[]
        \STATE Evaluate $Q_{\theta^t}(s,a)$ at all states.
        \STATE Compute exact target values at all states. \\
        $y(s,a) = r(s,a) + \gamma E_{s'}[ V_{\theta^t}(s')]$ 
        \STATE Minimize projection loss with respect to $\mu$: \\
        $\argmin{\theta} E_\mu[(Q_\theta(s,a) - y(s,a))^2]$
    \ENDFOR
\end{algorithmic}
\end{algorithm}
\end{minipage}
\begin{minipage}[t]{0.33\linewidth}
\begin{algorithm}[H]
\small
\caption{Sampled-FQI}
\label{alg:fqisampled}
\begin{algorithmic}[1]
    \STATE Initialize Q-value approximator $Q_\theta(s,a)$.
    \FOR{step $t$ in \{1, \dots, N\}}
        \item[]
        \item[]
        \STATE \textdiff{Collect $M$ samples from $\mu$.}
        \STATE Evaluate $Q_{\theta^t}(s,a)$ \textdiff{on samples.}
        \STATE Compute sampled target values \textdiff{on samples.}\\
        $\hat{y}_i = r_i + \gamma V_{\theta^t}(s'_i)$ 
        \STATE Minimize projection loss with respect to \textdiff{samples}: \\
        $ \argmin{\theta} \frac{1}{M}\sum_{i=1}^M (Q_\theta(s_i,a_i) - y_i)^2$
    \ENDFOR
\end{algorithmic}
\end{algorithm}
\end{minipage}
\begin{minipage}[t]{0.33\linewidth}
\begin{algorithm}[H]
\small
\caption{Replay-FQI}
\label{alg:fqireplay}
\begin{algorithmic}[1]
    \STATE Initialize Q-value approximator $Q_\theta(s,a)$, \textdiff{replay buffer $\ReplayBuffer$}.
    \FOR{step $t$ in \{1, \dots, N\}}
        \STATE \textdiff{Collect $K$ online samples from $\mu$.}
        \STATE \textdiff{Append online samples to buffer $\ReplayBuffer$.}
        \STATE Collect $M$ samples from $\ReplayBuffer$.
        \STATE Evaluate $Q_{\theta^t}(s,a)$ on samples.
        \STATE Compute sampled target values on samples\\
        $\hat{y}_i = r_i + \gamma V_{\theta^t}(s'_i)$ 
        \STATE Minimize projection loss with respect to samples: \\
        $ \argmin{\theta} \frac{1}{M}\sum_{i=1}^M (Q_\theta(s_i,a_i) - y_i)^2$
    \ENDFOR
\end{algorithmic}
\end{algorithm}
\end{minipage}
\end{small}
\end{figure*}

\textbf{Exact-FQI} (Algorithm~\ref{alg:fqiexact}): Exact-FQI computes the backup and projection on all state-action tuples without any sampling error.
It also assumes knowledge of dynamics and reward function to compute Bellman backups exactly. We use Exact-FQI to study convergence, distribution shift (by varying weighting distributions on transitions), and function approximation in the absence of sampling error. Exact-FQI eliminates errors due to sampling states, and computing inexact, sampled backups.

\textbf{Sampled-FQI} (Algorithm~\ref{alg:fqisampled}): Sampled-FQI is a special case of Exact-FQI,
where the Bellman error is approximated with Monte-Carlo estimates from a sampling distribution $\mu$, and the Bellman backup is approximated with samples from the dynamics as $r(s,a) + \gamma \max_{a'}Q(s', a')$. We use Sampled-FQI to study effects of overfitting. Sampled-FQI incorporates all sources of error -- arising from function approximation, sampling and also distribution shift.

\textbf{Replay-FQI} (Algorithm~\ref{alg:fqireplay}): Replay-FQI is a special case of Sampled-FQI that uses a \textit{replay buffer}~\citep{lin1992replay},
that saves past transition samples $(s, a, s', r)$, which are used for computing Bellman error. Replay-FQI strongle resembles DQN~\cite{Mnih2015}, lacking the online updates that allow $\mu$ to change within an FQI iteration. 
With large replay buffers, we expect the difference between Replay-FQI and DQN to be minimal as $\mu$ changes slowly.

We additionally investigate the following choices of weighting distributions ($\mu$) for the Bellman error. When sampling the Bellman error, these can be implemented by sampling directly from the distribution, or via importance sampling.

\textbf{Unif$(s,a)$}: Uniform weights over state-action space. This is the weighting distribution typically used by dynamic programming algorithms, such as FQI.

\textbf{$\pi(s,a)$}: The on-policy state-action marginal induced by $\pi$.

\textbf{$\pi^*(s,a)$}: The state-action marginal induced by $\pi^*$.

\textbf{Random$(s,a)$}: State-action marginal induced by executing uniformly random actions.

\textbf{Prioritized(s,a)}: Weights Bellman errors proportional to $|Q(s,a)-\backup Q(s,a)|$. This is similar to prioritized replay~\citep{Schaul2015} without importance sampling.

\textbf{Replay$(s,a)$} and \textbf{Replay10$(s,a)$}: Averaged state-action marginal of all policies (or the previous 10) produced during training. This simulates sampling uniformly from a replay buffer where infinite samples are collected from each policy. 

\subsection{Domains}

We evaluate our methods on suite of tabular environments where we can compute oracle values. This will help us compare, analyze and fix various sources of error by means of comparing the learned Q-functions to the true, oracle-compute Q-functions.
We selected 8 tabular domains, each with different qualitative attributes, including: gridworlds of varying sizes and observations, blind Cliffwalk~\citep{Schaul2015},
discretized Pendulum and Mountain Car based on implementations in OpenAI Gym~\citep{gym},
and a random sparsely connected graph. We give full details of these environments in Appendix~\ref{app:domains}, as well as their motivation for inclusion.

\subsection{Function Approximators}
Throughout our experiments, we use 2-layer ReLU networks, denoted by a tuple $(N, N)$ where N represents the number of units in a layer. The ``Tabular'' architecture refers to the case when no function approximation is used. 

\subsection{High-Dimensional Testing}

In addition to diagnostic experiments on tabular domains, we also wish to see if the observed trends hold true on high-dimensional environments. To this end, we include experiments on continuous control tasks in the OpenAI Gym benchmark~\citep{gym} (HalfCheetah-v2, Hopper-v2, Ant-v2, Walker2d-v2). In continuous domains, computing the maximum over actions of the Q-value is difficult ($\max_a Q(s,a)$). A common choice in this case is to use a second ``actor'' neural network to approximate $\arg\max_a Q(s,a)$~\cite{Lillicrap2015,pmlr-v80-fujimoto18a,Haarnoja18}. This approach most closely resembles Replay-FQI, but using the actor network in place of the max.

\section{Function Approximation and Convergence}
\label{sec:function_approx}

The first issue we investigate is the connection between function approximation and convergence properties.

\subsection{Technical Background}
As discussed in Section~\ref{sec:backrgound}, when function approximation is introduced to Q-learning, convergence guarantees are lost. This interaction between approximation and convergence has been a long-studied topic in reinforcement learning. In the control literature, it is closely related to the problems of state-aliasing or interference~\citep{Farrell95}. \citet{Baird1995} introduces a simple counterexample in which Watkin's Q-learning with linear approximators can cause unbounded divergence. In the policy evaluation scenario, \citet{Tsitsiklis1997} prove that on-policy TD-learning with linear function approximators converge, and methods such as GTD~\citep{Sutton09b} and ETD~\citep{Sutton2016} have extended results to off-policy cases.
In the control scenario, convergent algorithms such as SBEED~\citep{Dai2018} and Greedy-GQ~\citep{Maei2010} have been developed. However, several works have noted that divergence need not occur. \citet{munos2005erroravi} theoretically addresses the norm-mismatch problem, which show that unbounded divergence is impossible provided $\mu$ has adequate support and projections are non-expansive in p-norms. Concurrently to us, \citet{VanHesselt2018} experimentally find that unbounded divergence rarely occurs with DQN variants on Atari games. 
\subsection{How does function approximation affect convergence properties and suboptimality of solutions?}
The crucial quantities we wish to measure are a trend between function approximation and performance, and a measure for the bias in the learning procedure introduced by function approximation.
Thus, using Exact-FQI with uniform weighting (to remove sampling error), we measure the returns of the learning policy, and the $\linfnorm$ error between $Q^*$ and the solution found by Exact-FQI ($\lim_{t \to \infty} (\Projmu \backup)^t Q^0$) or the projection of the optimal solution  ($\Projmu Q^*$).
$\Projmu Q^*$ represents the best solution inside the model class, in absence of error from the bootstrapping process of FQI. Thus, the difference between FQI error and projection error represents the bias introduced by the bootstrapping procedure, while controlling for bias that is simply due to function approximation -- this quantity is roughly the \textit{inherent Bellman error} of the function class~\citep{munos2008finite}. This is the gap which can possibly be improved upon via better Q-learning algorithm design. We plot our results in Fig.~\ref{fig:function_approx}.

\begin{figure}
\caption{\label{fig:function_approx} Normalized returns and normalized Q-function error with function approximation, averaged across domains and seeds. We see that for small architectures, there is a significant gap between the solution found by FQI (FQI Error) and the best solution within the model class (Project Error).}
\includegraphics[width=0.95\columnwidth]{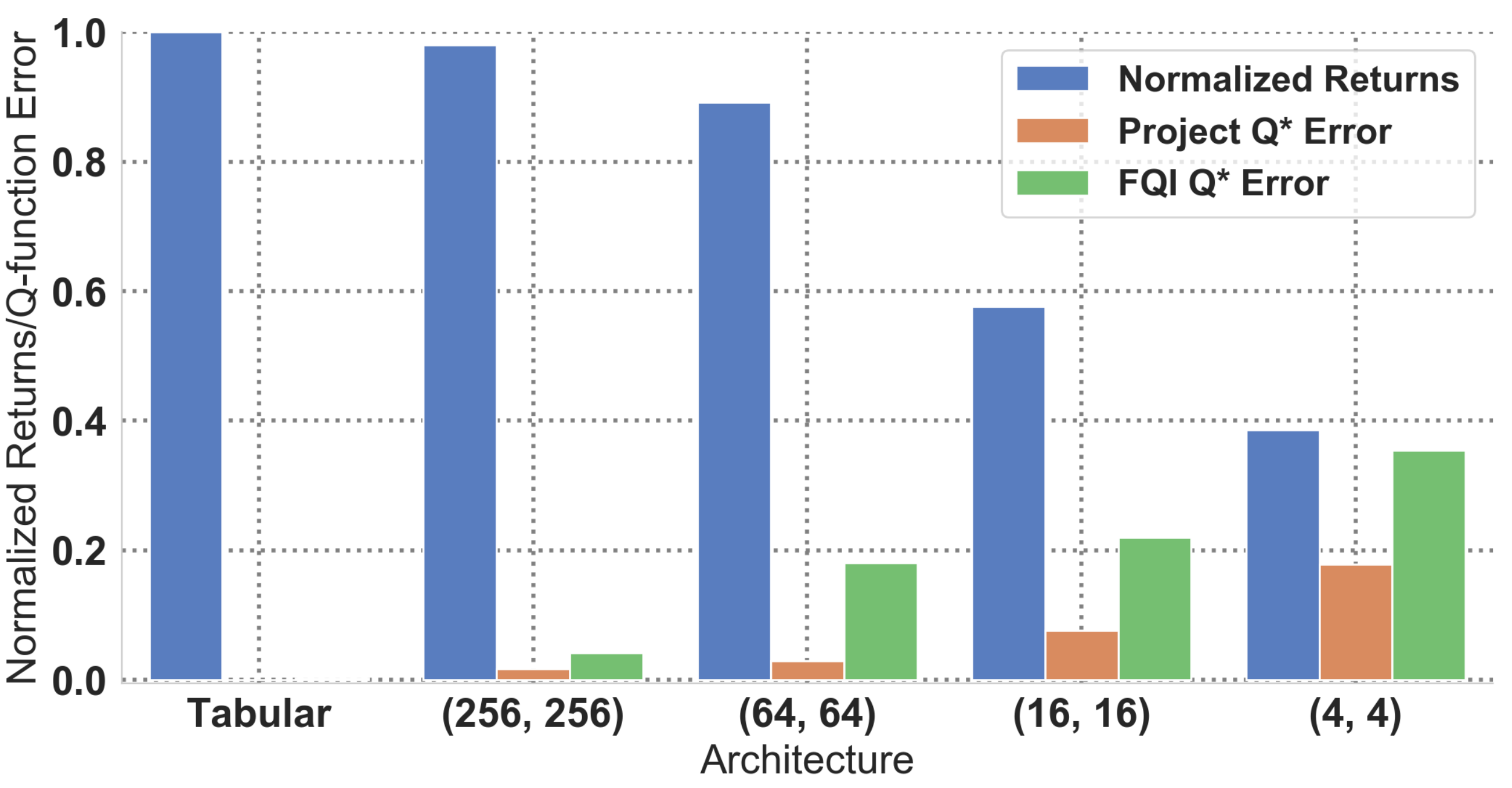}
\vspace{-0.2in}
\end{figure}

We first note the obvious trend that smaller architectures produce lower returns, and converge to more suboptimal solutions. 
However, we also find that smaller architectures introduce significant bias in the \textit{learning process}, and there is often a significant gap between the solution found by Exact-FQI and the best solution within the model class. 
This gap may be due to the fact that when the target is bootstrapped, we must be able to represent all Q-function along the path to the solution, and not just the final result~\citep{Bertsekas96}.
This observation implies that using large architectures is crucial not only because they have capacity to represent a better solution, but also because they are significantly easier to train using bootstrapping, and suffer less from nonconvergence issues. 
We also note that divergence rarely happens in practice. We observed divergence in 0.9\% of our experiments using function approximation, measured by the largest Q-value growing larger than 10 times that of $Q^*$. 

For high-dimensional problems, we present experiments on varying the architecture of the Q-network in SAC~\cite{Haarnoja18} in Appendix Fig.~\ref{fig:size_sac}. We still observe that large networks have the best performance, and that divergence rarely happens even in high-dimensional continuous spaces. We briefly discuss theoretical intuitions on apparent discrepancy between the lack of unbounded divergence in relation known counterexamples in Appendix~\ref{app:bounded_error}.

\section{Sampling Error and Overfitting}
\label{sec:overfitting}

A second source of error in minimizing the Bellman error, orthogonal to function approximation, is that of sampling or generalization error. The next issue we investigate is the effect of sampling error on Q-learning methods.

\subsection{Technical Background}
\begin{wrapfigure}{r}{0.6\columnwidth}
\centering
\vspace{-10pt}
\includegraphics[scale=0.27]{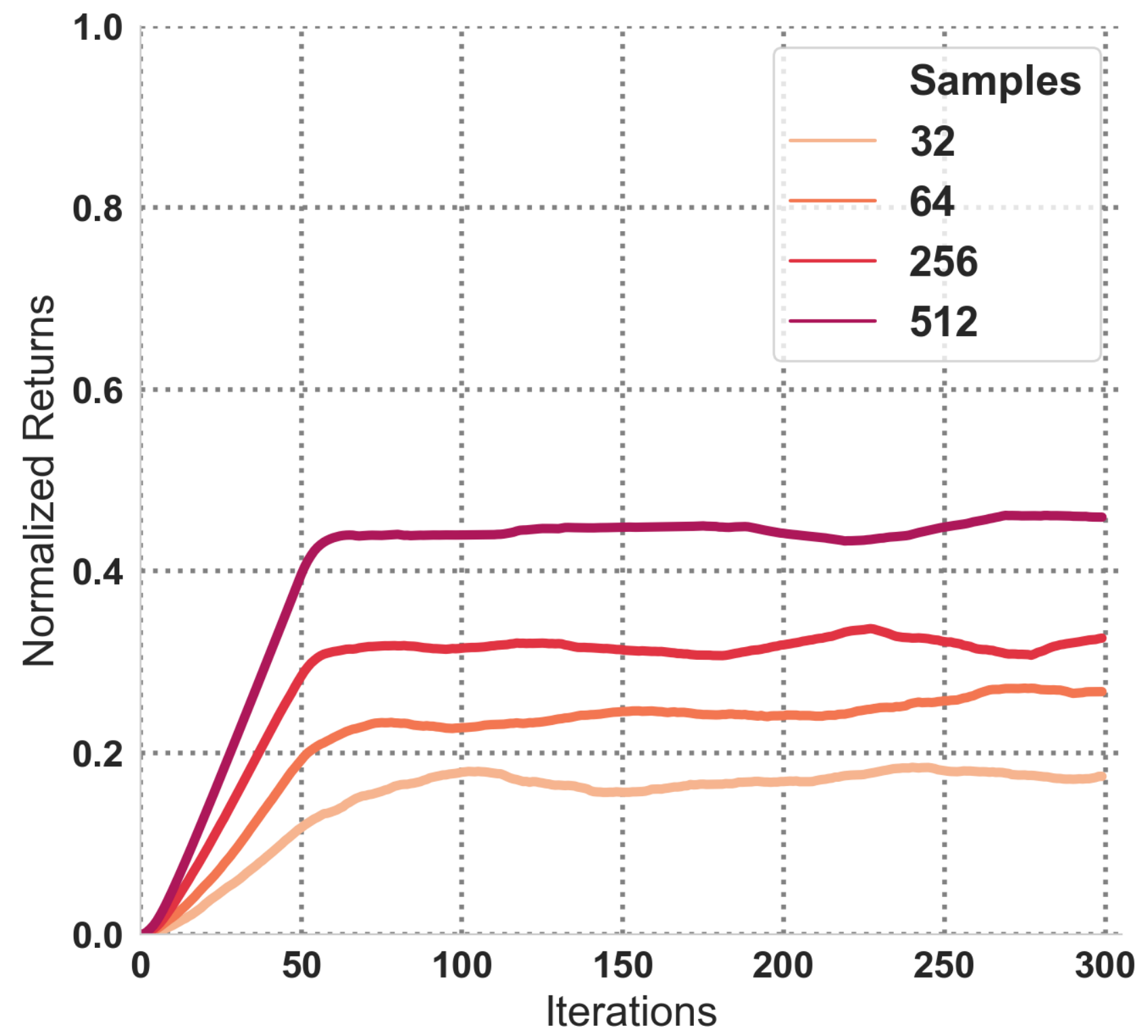}
\caption{\label{fig:sampling_256} Samples plotted with returns for a 256x256 network. More samples yields better performance.}
\vspace{-10pt}
\end{wrapfigure}

Approximate dynamic programming assumes that the projection of the Bellman backup (Eqn.~\ref{eqn:bellman_projection}) is computed exactly, but in reinforcement learning we can normally only compute the \textit{empirical Bellman error} over a finite set of samples. In the PAC framework, overfitting can be quantified by a bounded error in between the empirical and expected loss with high probability, which decays with sample size~\citep{Shalev2014}. \citet{munos2008finite, maillard2010finite, tosatto2017boosted} provide such PAC-bounds which account for sampling error in the context of Q-learning and value-based methods, and quantify the quality of the final solution in terms of sample complexity.

We analyze several key points that relate to sampling error. First, we show that Q-learning is prone to overfitting, and that this overfitting has a real impact on performance, in both tabular and high-dimensional settings. We also show that the replay buffer is in fact a very effective technique in addressing this issue, and discuss several methods to migitate the effects of overfitting in practice.

\begin{figure}[tb]
\includegraphics[width=0.95\columnwidth]{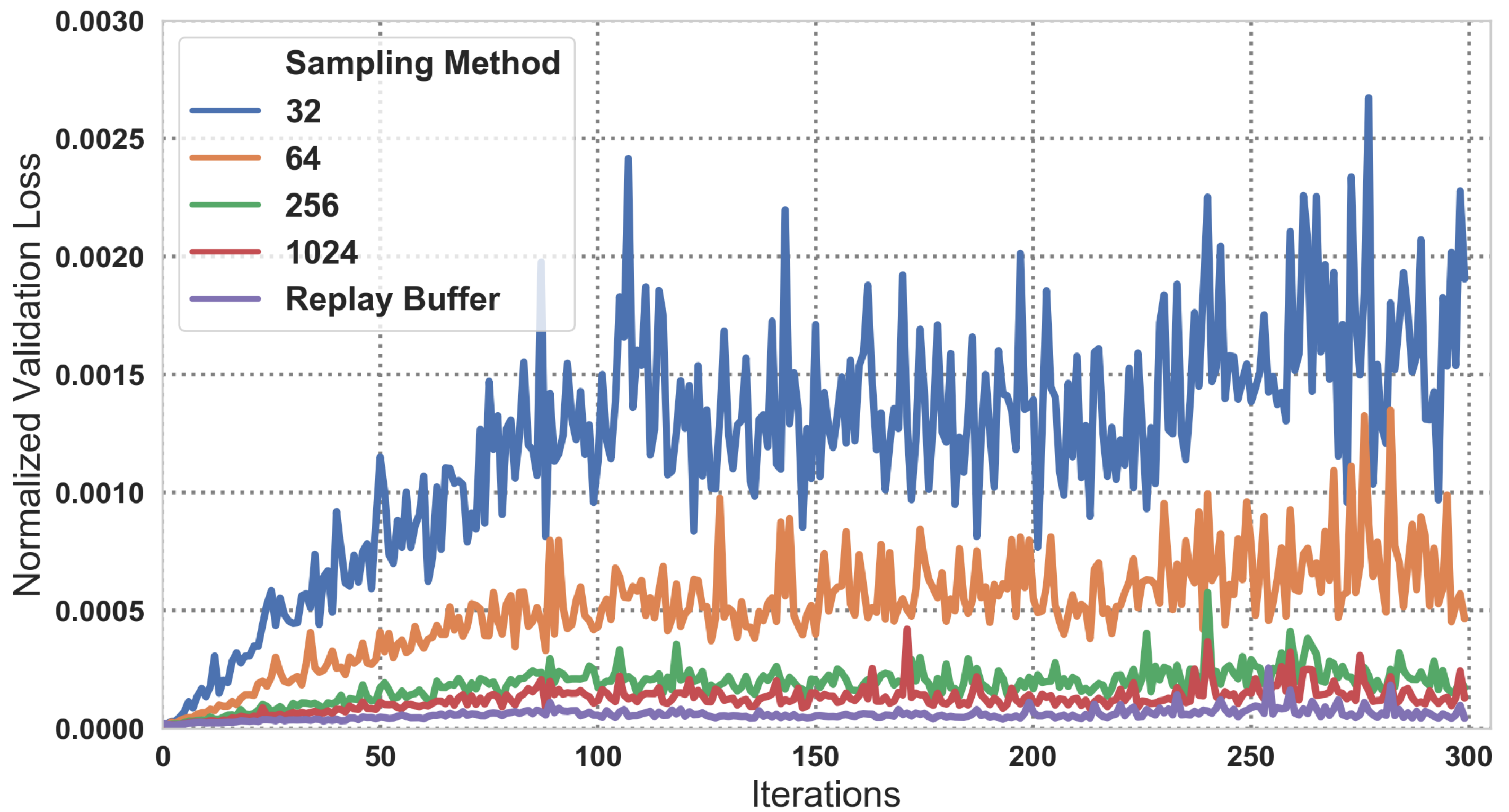}
\caption{\label{fig:sampling_validation_loss} On-policy validation losses for varying amounts of on-policy data (or replay buffer), averaged across environments and seeds. Note that sampling from the replay buffer has lower on-policy validation loss, despite bias from distribution shift.}
\end{figure}

\subsection{Quantifying Overfitting}
We first quantify the amount of overfitting that happens during training, by varying the number of samples. In order provide comparable validation errors across different experiments, we fix a reference sequence of Q-functions, $Q^1, ... , Q^N$, obtained during a normal training run. We then retrace the training sequence, and minimize the projection error $\Projmu(Q^t)$ for each training iteration, using varying amounts of on-policy data or sampling from a replay buffer. We measure the exact validation error (the expected Bellman error) at each iteration under the on-policy distribution, plotted in Fig.~\ref{fig:sampling_validation_loss}. We note the obvious trend that more samples leads to lower validation loss, confirming that overfitting can in fact occur. A more interesting observation is that sampling from the replay buffer results in the lowest on-policy validation loss, despite bias due to distribution mismatch from sampling off-policy data. As we discuss in Section~\ref{sec:analysis_nonstationarity}, we believe that replay buffers are mainly effective because they greatly reduce the effect of overfitting and create relatively good coverage over the state space, not necessarily due to reducing the effects of distribution shift.

Next, Fig.~\ref{fig:sampling_256} shows the relationship between number of samples and returns. We see a clear trend that higher sample count leads to improved learning speed and a better final solution, confirming our hypothesis that overfitting has a significant effect on the performance of Q-learning. A full sweep including architectures is presented in Appendix Fig.~\ref{fig:sampling_arch_sweep}. We observe that despite overfitting being an issue, larger architectures still perform better because the bias introduced by smaller architectures dominates.

\begin{figure}[tb]
\includegraphics[trim={0 0 7.0cm 0},clip,width=0.95\columnwidth]{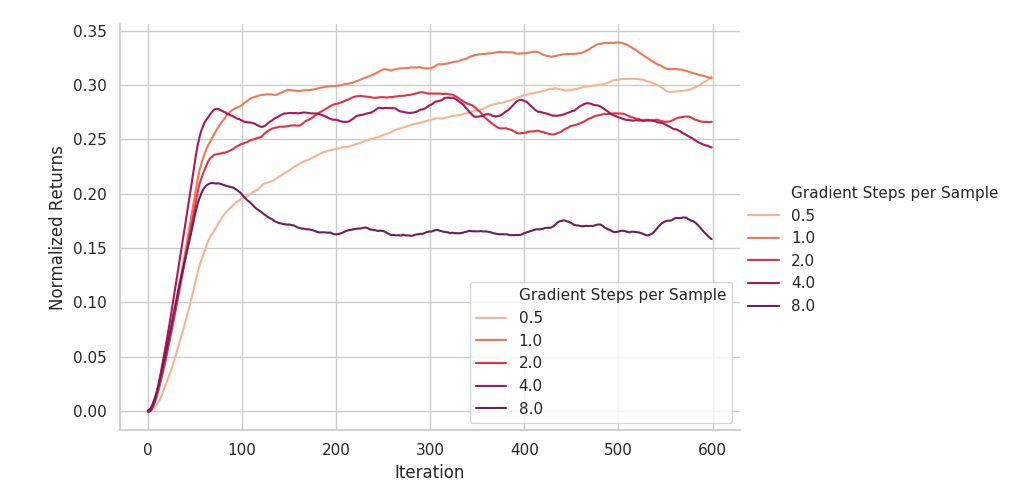}
\caption{\label{fig:fqi_grad_sweep}Normalized returns plotted over training iterations (32 samples are taken per iteration), for different ratios of gradient steps taken per sample during projection using Replay-FQI. We observe that intermediate values of gradient steps work best, and too many gradient steps hinders performance.}
\end{figure}

\begin{figure}[tb]
\includegraphics[trim={0 0 4.4cm 0},clip,width=0.95\columnwidth]{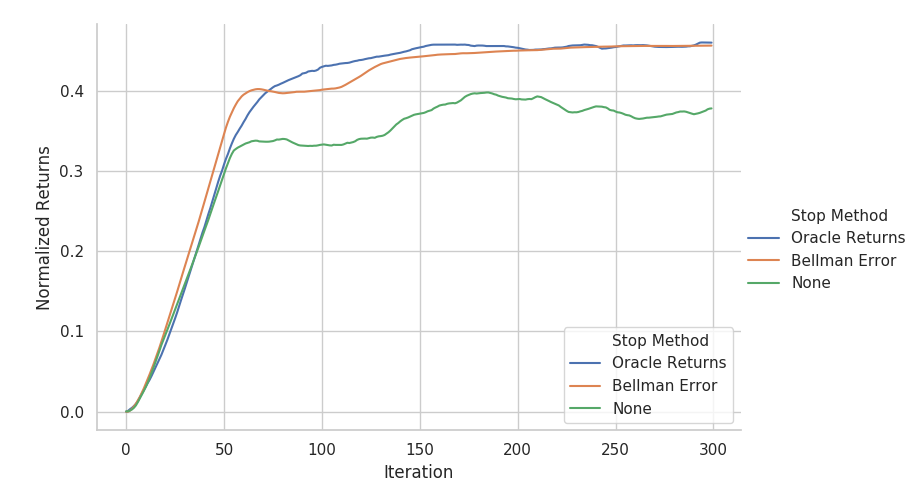}
\caption{\label{fig:validation_stop}Normalized returns plotted over training iterations (32 samples are taken per iteration), for different early stopping methods using Replay-FQI. We observe that using proper early stopping can result in a modest performance increase.}
\end{figure}

\subsection{What methods can be used to compensate for overfitting?}

Finally, we discuss methods to compensate for overfitting. One common method for reducing overfitting is to regularize the function approximator to reduce its capacity. However, as we have seen before that weaker architectures can give rise to suboptimal convergence, we instead study \textit{early stopping} methods to mitigate overfitting without reducing model size.
First, we observe that the number of gradient steps taken per sample in the projection step has an important effect on performance -- too few steps and the algorithm learns slowly, but too many steps and the algorithm may initially learn quickly but overfit. To show this, we run a hyperparameter sweep over the number of gradient steps taken per environment step in Replay-FQI and TD3 (TD3 uses 1 by default). Results for FQI are shown in Fig.~\ref{fig:fqi_grad_sweep}, and for TD3 in Appendix Fig.~\ref{fig:td3_grad_sweep}.

In order to understand whether better early stopping criteria can possibly help with overfitting, we employ \emph{oracle} early stopping rules. While neither of these rules can be used to solve overfitting in practice, these experiments can provide guidance for future methods and an ``upper bound'' on the best improvement that can be obtained from optimal stopping. We investigate two oracle early stopping criteria for setting the number of gradient steps: using the expected Bellman error and the expected returns of the greedy policy w.r.t. the current Q-function (oracle returns). We implement both methods by running the projection step of Replay-FQI to convergence using gradient descent, and afterwards selecting the intermediate Q-function which is judged best by the evaluation metric (lowest Bellman error or highest returns). Using such oracle stopping metrics results in a modest boost in performance in tabular domains (Fig.~\ref{fig:validation_stop}). Thus, we believe that there is promise in further improving such early-stopping methods for reducing overfitting in deep RL algorithms.

We might draw a few actionable conclusions from these experiments. First, overfitting is indeed a serious issue with Q-learning, and too many gradient steps or too few samples can lead to poor performance. Second, replay buffers and early stopping can be used to mitigate the effects of overfitting. Third, although overfitting is a problem, large architectures are still preferred, because the harm from function approximation bias outweighs the harm from increased overfitting with large models.
\section{Non-Stationarity}
\label{sec:analysis_nonstationarity}

In this section, we discuss issues related to the non-stationarity of the Q-learning process (relating to the Bellman backup and Bellman error minimization).

\subsection{Technical Background}
Instability in Q-learning methods is often attributed to the nonstationarity of the regression objective \citep{Lillicrap2015,Mnih2015}. 
Nonstationarity occurs in two places: in the changing target values $\backup Q$, and in a changing weighting distribution $\mu$ (``distribution shift'') (i.e., due to samples being taken from different policies). Note that a non-stationary objective, by itself, is not indicative of instability. For example, gradient descent can be viewed as successively minimizing linear approximations to a function: for gradient descent on $f$ with parameter $\theta$ and learning rate $\alpha$, we have the ``moving'' objective $\theta^{t+1} = \argmin{\theta}\{ \theta^T \nabla_\theta f(\theta^t) - \frac{1}{2\alpha} \normtt{\theta - \theta^t} \} = \theta^t - \alpha \nabla_\theta f(\theta^t)$. 
However, the fact that the Q-learning algorithm prescribes an update rule and not a stationary objective complicates analysis. Indeed, the motivation behind algorithms such as GTD~\citep{Sutton09a, Sutton09b} and residual methods~\citep{Baird1995,scherrer2010residual} can be seen as introducing a stationary objective that can be optimized with standard procedures such as gradient descent for increased stability.
Therefore, a key question to investigate is whether these non-stationarities are detrimental to the learning process.

\subsection{Does a moving target cause instability in the absence of a moving distribution?}

To study the moving target problem, we must first isolate the effects of a moving target, and study how the rate at which the target changes impacts performance. To control the rate at which the target changes, we introduce an additional smoothing parameter $\alpha$ to Q-iteration, where the target values are now computed as an $\alpha$-moving average over previous targets. We define the $\alpha$-smoothed Bellman backup, $\backup^\alpha$, as follows: 
\[ \backup^{\alpha}Q = \alpha \backup Q + (1-\alpha)Q\]
This scheme is inspired by the soft target update used in algorithms such as DDPG~\citep{Lillicrap2015} and SAC~\citep{Haarnoja2017} to improve the stability of learning. Standard Q-iteration uses a ``hard'' update where $\alpha=1$. A soft target update weakens the contraction of Q-iteration from $\gamma$ to $1-\alpha+\alpha\gamma$ (See Appendix~\ref{app:alpha_smoothed_q}),
so we expect slower convergence, but perhaps it is more stable under heavy function approximation error. We performed experiments with this modified backup using Exact-FQI under the $\text{Unif}(s,a)$ weighting distribution.

Our results are presented in Appendix Fig.~\ref{fig:smooth_fqi}.
We find that the most cases, the hard update with $\alpha=1$ results in the fastest convergence and highest asymptotic performance. However, for the smallest two architectures we used, $4 \times 4$ and $16 \times 16$, lower values of $\alpha$ (such as 0.1) achieve slightly higher asymptotic performance. Thus, while more expressive architectures are still stable under fast-changing targets, we believe that a slowly moving target may have benefits under heavy approximation error. This evidence points to either using large function approximators, in line with the conclusions drawn in the previous sections, or adaptively slowing the target updates when the architecture is weak (relative to the problem difficulty) and the projected Bellman error is therefore high.

\subsection{Does distribution shift impact performance?}
\label{sec:distr_shift}

\begin{figure}[tb]
\caption{\label{fig:distribution_shift_tv_loss} Distribution shift and loss shift plotted against time. Prioritized and on-policy distributions induce the greatest shift, whereas replay buffers greatly reduce the amount of shift.}
\includegraphics[width=0.51\columnwidth]{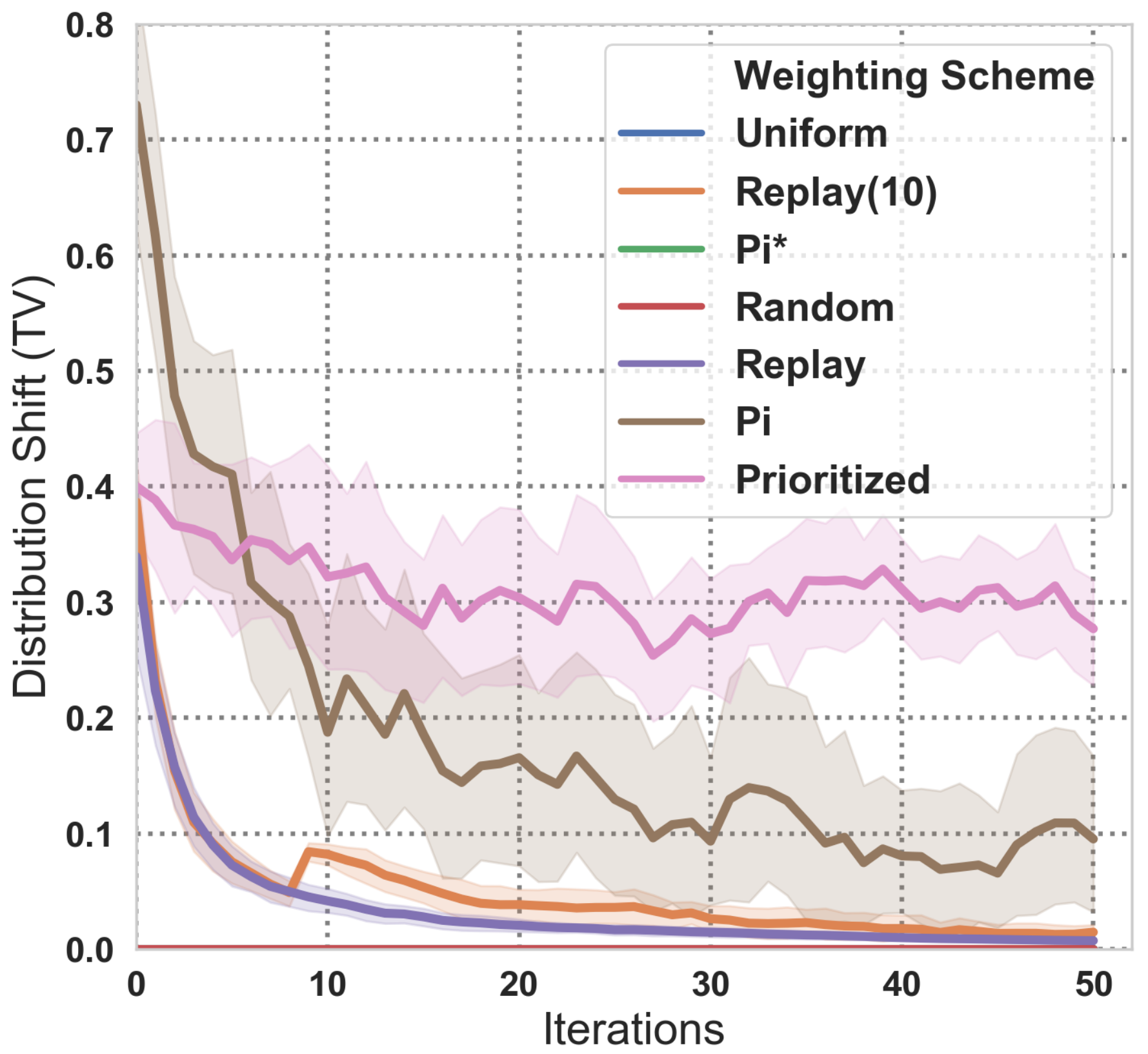}
\includegraphics[width=0.47\columnwidth]{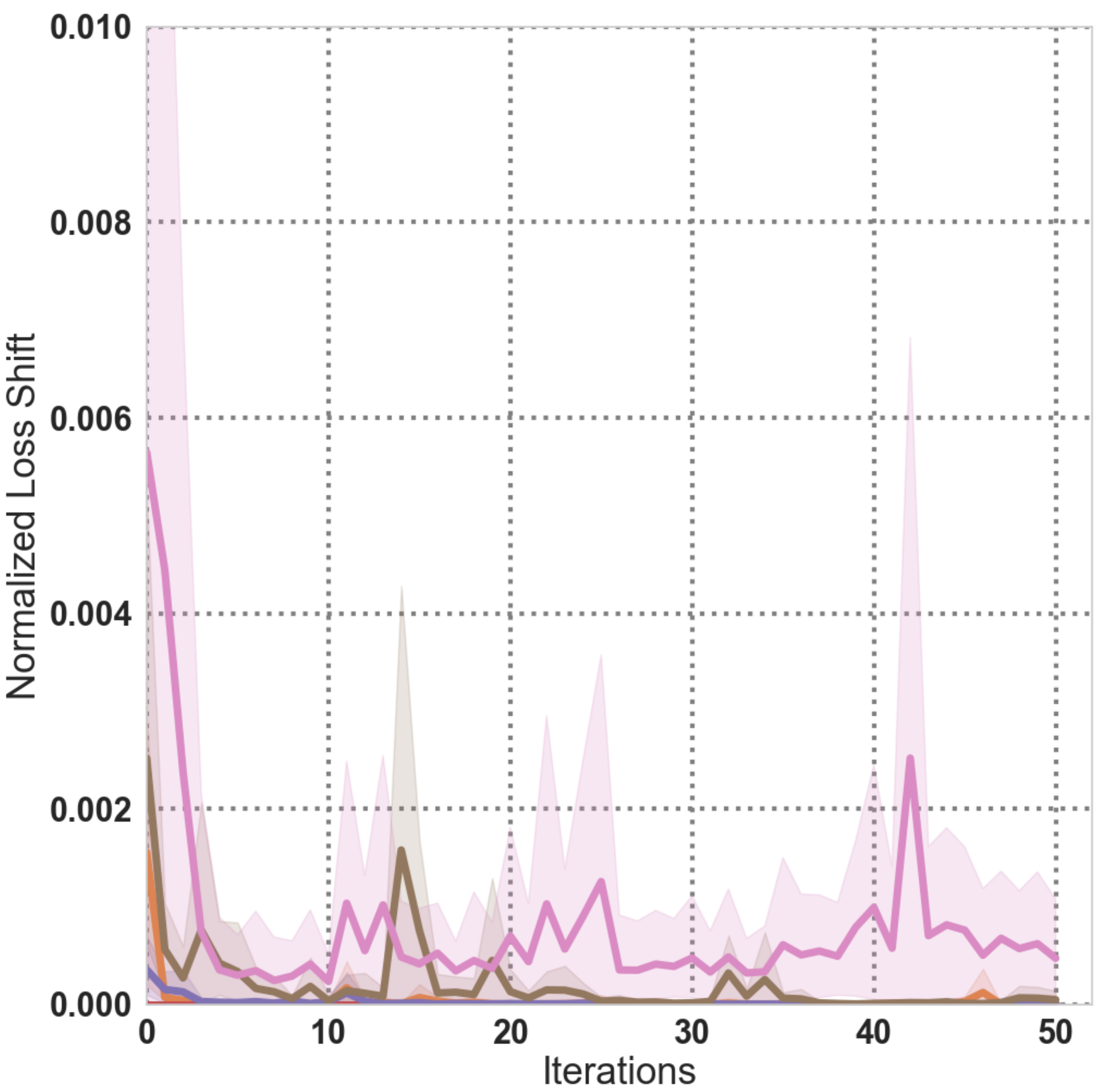}
\end{figure}

To study the distribution shift problem, we exactly compute the amount of distribution shift between iterations in total-variation distance, $D_{TV}(\mu^{t+1} || \mu^{t})$ and the ``loss shift'':
\[\mathbb{E}_{\mu^{t+1}}[ (Q^{t} - \backup Q^{t})^2] - \mathbb{E}_{\mu^{t}}[ (Q^{t} - \backup Q^{t})^2] .\] 
The loss shift quantifies the Bellman error objective when evaluated under a new distribution - if the distribution shifts to previously unseen or low support states, we would expect a highly inaccurate Q-value in such states, and a correspondingly high loss shift.

\begin{figure}[ht]
\caption{\label{fig:distribution_shift_vs_returns} Average distribution shift across time for different weighting distributions, plotted against returns for a 256x256 model. We find that distribution shift does not have strong correlation with returns.}
\includegraphics[width=0.99\columnwidth]{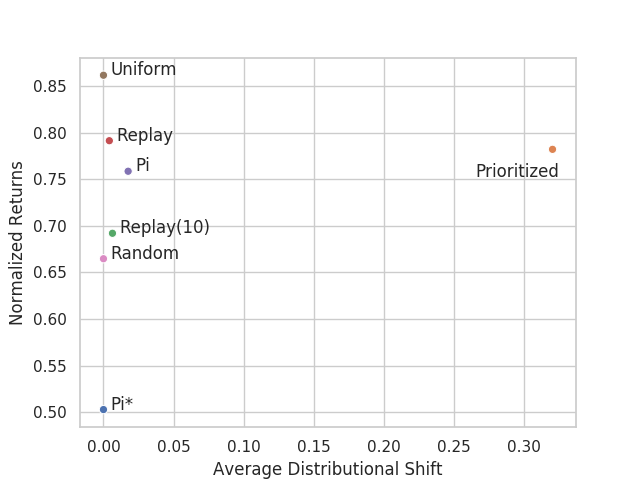}
\vspace{-0.2in}
\end{figure}

We run our experiments using Exact-FQI with a 256x256 layer architecture, and plot the distribution discrepancy and the loss discrepancy in Fig.~\ref{fig:distribution_shift_tv_loss}. 
We find that Prioritized$(s,a)$ has the greatest shift, followed by on-policy variants. Replay buffers greatly reduce distribution shift compared to on-policy learning, which is similar to the de-correlation argument cited for its use by~\citet{Mnih2015}.
However, we find that this metric correlates very little with the actual performance of the algorithm (Fig.~\ref{fig:distribution_shift_vs_returns}). For example, prioritized weighting performs well yet has high distribution shift.

Overall, our experiments indicate that nonstationarities in both distributions and target values, when isolated, do not cause significant stability issues. Instead, other factors such as sampling error and function approximation appear to have more significant effects on performance. In the light of these findings, we might therefore ask: can we design a \emph{better} sampling distribution, without regard for distributional shift and with regard for high-entropy, that results in better final performance, and is realizable in practice? We investigate this in the following section.

\section{Sampling Distributions}
\label{sec:sampling_distributions}

As alluded to in Section~\ref{sec:analysis_nonstationarity}, the choice of sampling distribution $\mu$ is an important design decision can have a large impact on performance. Indeed, it is not immediately clear which distribution is ideal for Q-learning. In this section, we hope to shed some light on this issue.

\subsection{Technical Background}
Off-policy data has been cited as one of the ``deadly triads'' for Q-learning~\citep{suttonrlbook}, which has potential to cause instabilities in learning. On-policy distributions~\citep{Tsitsiklis1997} and fixed behavior distributions~\citep{Sutton09a,Maei2010} have often been targeted for theoretical convergence analysis, and many works use importance sampling to correct for off-policyness~\citep{precup2001offpol, munos2016safe}
However, to our knowledge, there is relatively little guidance which compares how different weighting distributions compare in terms of convergence rate and final solutions.

Nevertheless, several works give hypotheses on good choices for weighting distributions.~\citep{munos2005erroravi} provides an error bound which suggests that ``more uniform'' weighting distributions can guarantee better worst-case performance.~\citep{NIPS2017_6913} suggests that when the state-distribution is fixed, the action distribution should be weighted by the optimal policy for residual Bellman errors. In deep RL, several methods have been developed to prevent instabilities in Q-Learning, such as prioritized replay~\citep{Schaul2015}, and mixing replay buffer with on-policy data~\citep{hausknecht2016policy,zhang2018deeper} have been found to be beneficial. In our experiments, we aim to empirically analyze multiple choices for weighting distributions to determine which are the most effective.

\subsection{What Are the Best Weighting Distributions in Absence of Sampling Error?}
\label{subsec:dist_shift_exact}
\begin{figure}[ht]
\caption{\label{fig:weighting_schemes} Weighting distribution versus architecture in Exact-FQI. Replay(s, a) consistently provides the highest performance. Note that Adversarial Feature Matching is comparable to Replay(s, a), but surprisingly better for small networks. }
\includegraphics[width=0.99\columnwidth]{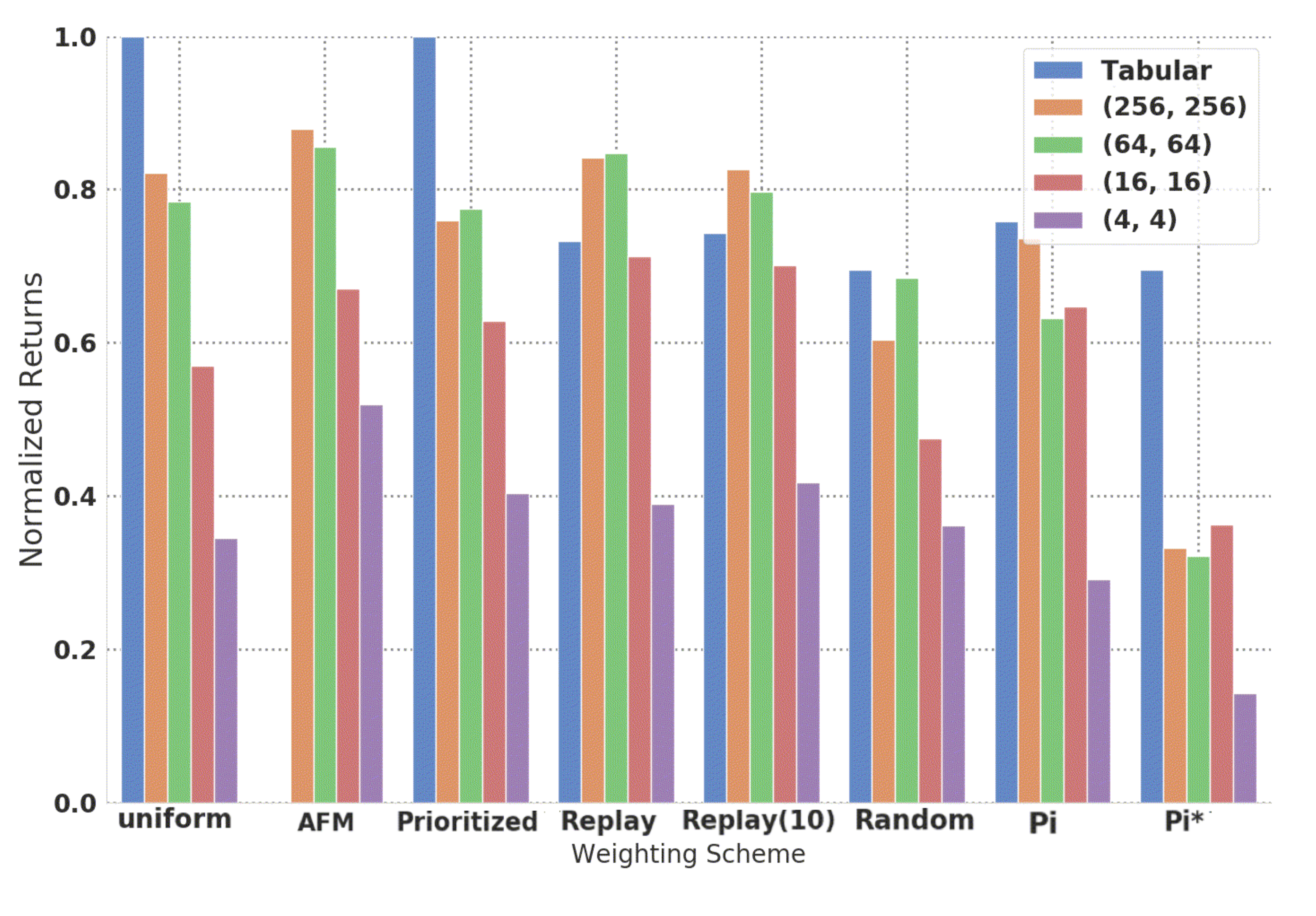}
\vspace{-0.3in}
\end{figure}

\begin{figure}[ht]
\caption{\label{fig:weighting_entropy_vs_returns} Normalized returns plotted against normalized entropy for different weighting distributions. All experiments use Exact-FQI with a 256x256 network. We see a general trend that high-entropy distributions lead to greater performance.}
\includegraphics[width=0.99\columnwidth]{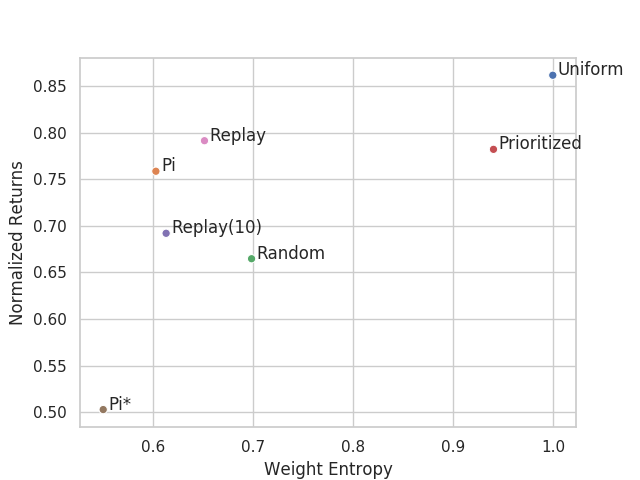}
\end{figure}

We begin by studying the effect of weighting distributions when disentangled from sampling error. We run Exact-FQI with varying choices of architectures and weighting distributions and report our results in Fig.~\ref{fig:weighting_schemes}. $\text{Unif}(s,a)$ $Replay(s,a)$, and $\text{Prioritized}(s,a)$ consistently result in the highest returns across all architectures.
We believe that these results are in favor of the \textit{uniformity} hypothesis: the top performing distributions spread weight across larger support of the state-action space. For example, a replay buffer contains state-action tuples from many policies, and therefore would be expected to have wider support than the state-action distribution of a single policy. We can see this general trend in Fig.~\ref{fig:weighting_entropy_vs_returns}. These distributions generally result in the tightest contraction rates, and allow the Q-function to focus on locations where the error is high. In the sampled setting, this observation motivates exploration algorithms that maximize state coverage (for example, ~\citet{hazan2018} solve an exploration objective which maximizes state-space entropy).
However, note that in this particular experiment, there is no sampling. All states are observed, just with different weights, thus isolating the issue of distributions from the issue of sampling.

\subsection{Designing a Better Off-Policy Distribution: Adversarial Feature Matching}
\label{sec:afm}

In our final study, we attempt to design a better weighting distribution using insights from previous sections that can be easily integrated into deep RL methods. We refer to this method as adversarial feature-matching (AFM). We draw upon three specific insights outlined in previous analysis. First, the function approximator should be incentivized to maximize its ability to distinguish states to minimize function approximation bias (Section~\ref{sec:function_approx}). Second, the weighting distribution should emphasize areas where the Q-function incurs high Bellman error, in order to minimize the discrepancy between $\ltwonorm$ norm error and $\linfnorm$ norm error. Third, more-uniform weighting distributions tend to be higher performant. The first insight was also demonstrated in \cite{martha2018sparse} where enforcing sparsity in the Q-function was found to provide locality in the Q-function which prevented catastrophic interference and provided better values for bootstrapping.

We propose to model our problem as a minimax game, where the weighting distribution is a parameterized adversary $p_\phi(s, a)$ which tries to \emph{maximize} the Bellman error, while the Q-function ($Q_\theta(s, a)$) tries to minimize it. 
Note that in the unconstrained setting, this game is equivalent to minimizing the $\linfnorm$ norm error in its dual-norm representation. However, in practical settings where minimizing stochastic approximations of the $\linfnorm$ norm can be difficult for neural networks (also noticed when using PER~\cite{VanHesselt2018}), it is crucial to introduce constraints to limit the power of the adversary. These constraints also make the adversary closer to the uniform distribution while still allowing it to be sufficiently different at specific state-action pairs.

We elect to use a feature matching constraint which enforces the expected feature vectors, $\mathbb{E}[\Phi(s)]$, under $p_\phi(s, a)$ to \emph{roughly} match the expected feature vector under uniform sampling from the replay buffer. We can express the output of a neural network Q-function as $Q_\theta(s, a) = w_{a}^T \Phi_\theta(s)$ or, in the continuous case, as $Q_\theta(s, a) = w^T \Phi_\theta(s, a)$,  where the feature vector $\Phi_\theta(s), \Phi_\theta(s, a)$ represent the the output of all but the final layer.
Intuitively, this constraint restricts the adversarial sampler to distributing probability mass among states (or state-action pairs) that are perceptually similar to the Q-function, which in turn forces the Q-function to reduce state-aliasing by learning features that are more separable. Note that, in our case, $\phi(s,a) = \nabla_{w} Q_{w, \theta}(s, a)$. This also provides a natural extension of our method by performing expected gradient matching over all parameters ($\mathbb{E}_{p_\phi(s, a)}[\nabla_{w, \theta} Q_{w, \theta}(s, a)]$), instead of matching only $\Phi$ (we leave it to future work to explore this direction). Formally, this objective is given as follows:
\begin{multline*}
    \min_{\theta, w} \max_{\phi} \mathbb{E}_{p_\phi(s, a)} [(Q_{w, \theta} (s, a) - y(s, a))^2]\\
s.t.~~ \vert\vert \mathbb{E}_{p_\phi(s, a)}[\Phi(s)] - \frac{\sum_i \Phi(s_i)}{N} \vert\vert \leq \varepsilon
\end{multline*}
Note that $\Phi(s)$ is a function of $\theta$ but, while solving the maximization, $\theta$ is assumed to be a constant. This is equivalent to solving only the inner maximization with a constraint, and empirically provides better stability. Implementation details for AFM are provided in \textbf{Appendix \ref{app:adversarial}}. The $\frac{\sum_{i} \Phi(s_i)}{N}$ denotes an estimator for the true expectation under some sampling distribution, such as a uniform distribution over all states and actions (in exact FQI) or the replay buffer distribution. So, $\frac{\sum_{i} \Phi(s_i)}{N} \approx E_{p_{rb}}[\Phi]$ holds when using a replay buffer.

While both AFM and PER tend to upweight samples in the buffer with a high Bellman error, PER explicitly attempts to \emph{reduce} distribution shift via importance sampling. As we observed in Section~\ref{sec:sampling_distributions}, distributional shift is not actually harmful in practice, and AFM dispenses with this goal, instead explicitly aiming to rebalance the buffer to attain better coverage via adversarial optimization. In our experiments, this results in substantially better performance, consistent with the hypothesis that coverage, rather than reduction of distributional shift, is the most important property in a sampling distribution.

In tabular domains with Exact-FQI, we find that AFM performs at par with the top performing weighting distributions, such as $\text{Unif}(s, a)$ and \textit{better} than $\text{Prioritized}(s, a)$ (Fig.~\ref{fig:weighting_schemes}). This confirms that adaptive prioritization works better than Prioritized($s, a$). Another benefit of AFM is its robustness to function approximation and the performance gains in the case of small architectures (say, $(4, 4)$) are particularly noticeable. (Fig.~\ref{fig:weighting_schemes})

In tabular domains with Replay-FQI (Table~\ref{table:final}), we also compare AFM to prioritized replay (PER)~\citep{Schaul2015}, where AFM and PER perform similarly in terms of normalized returns. Note that AFM reweights samples drawn uniformly from the buffer, whereas PER changes which samples are actually drawn. We also evaluate a variant of AFM (AFM+Sampling in Table~\ref{table:final}) which changes which samples instead of reweighting.
Essentially, in this version we sample from the replay buffer using probabilities determined by the AFM optimization, rather than using importance sampling while making bellman updates. We note that, in Table~\ref{table:final}, AFM+Sampling performs strictly better than AFM and PER.

We further evaluate AFM on MuJoCo tasks with the TD3 algorithm~\citep{pmlr-v80-fujimoto18a} and the entropy constrained SAC algorithm~\citep{Haarnoja18}. We find that in all 3 tested domains (Half-Cheetah, Hopper and Ant), AFM yields substantial empirical improvement in the case of TD3 (Fig.~\ref{fig:td3_results_adv}) and performs slightly better than entropy constrained SAC (Fig.~\ref{fig:sac_results_adv}). Surprisingly, we found PER to not work very well in these domains. In light of these results, we conclude that: (1) the choice of sampling distribution is very  important for performance, and (2) considerations such as incorporating knowledge about the function approximator (for example, through $\Phi$) into the choice of $\mu$ (the sampling/weighting distribution) can be very effective.

\begin{figure*}[t]
\begin{subfigure}[t]{0.33\textwidth}
    \includegraphics[scale=0.24]{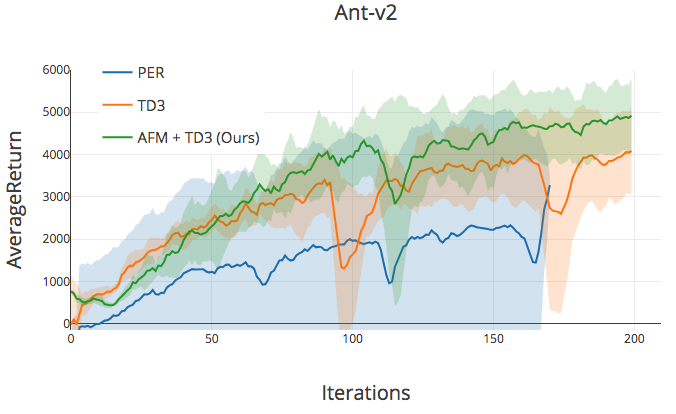}
    \caption{Ant-v2}
\end{subfigure}
\begin{subfigure}[t]{0.33\textwidth}
    \includegraphics[scale=0.24]{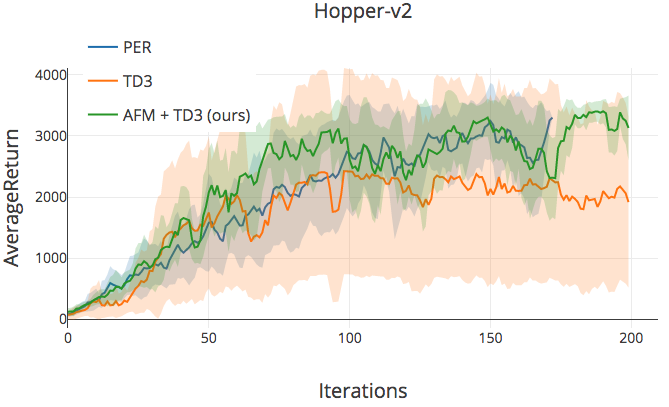}
    \caption{Hopper-v2}
\end{subfigure}
\begin{subfigure}[t]{0.33\textwidth}
    \includegraphics[scale=0.24]{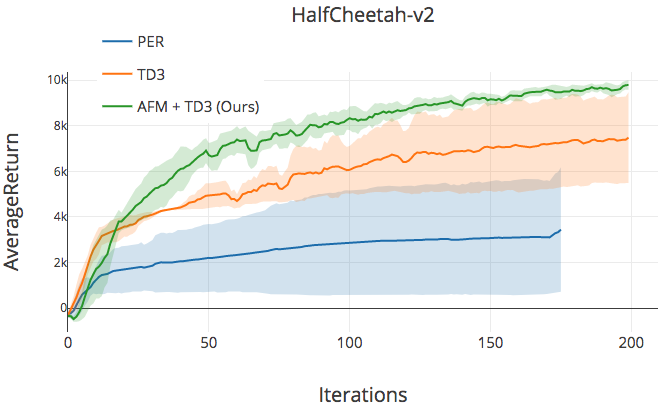}
    \caption{HalfCheetah-v2}
\end{subfigure}
\caption{Average Return for rollouts performed with a trained the TD3 algorithm with/without \textbf{AFM} (Ours) and with Prioritized Replay (PER). Note that on an average AFM performs better than the baseline and the Prioritized Replay. Each iteration on the x-axis corresponds to 5000 environment steps.}
\label{fig:td3_results_adv}
\end{figure*}

\begin{figure*}[t]
\begin{subfigure}[t]{0.33\textwidth}
    \includegraphics[scale=0.24]{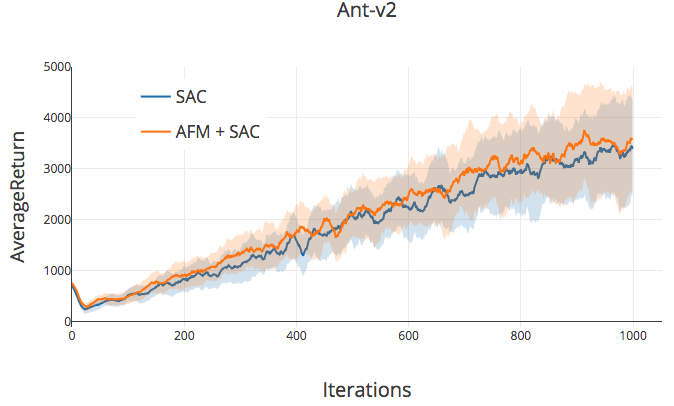}
    \caption{Ant-v2}
\end{subfigure}
\begin{subfigure}[t]{0.33\textwidth}
    \includegraphics[scale=0.24]{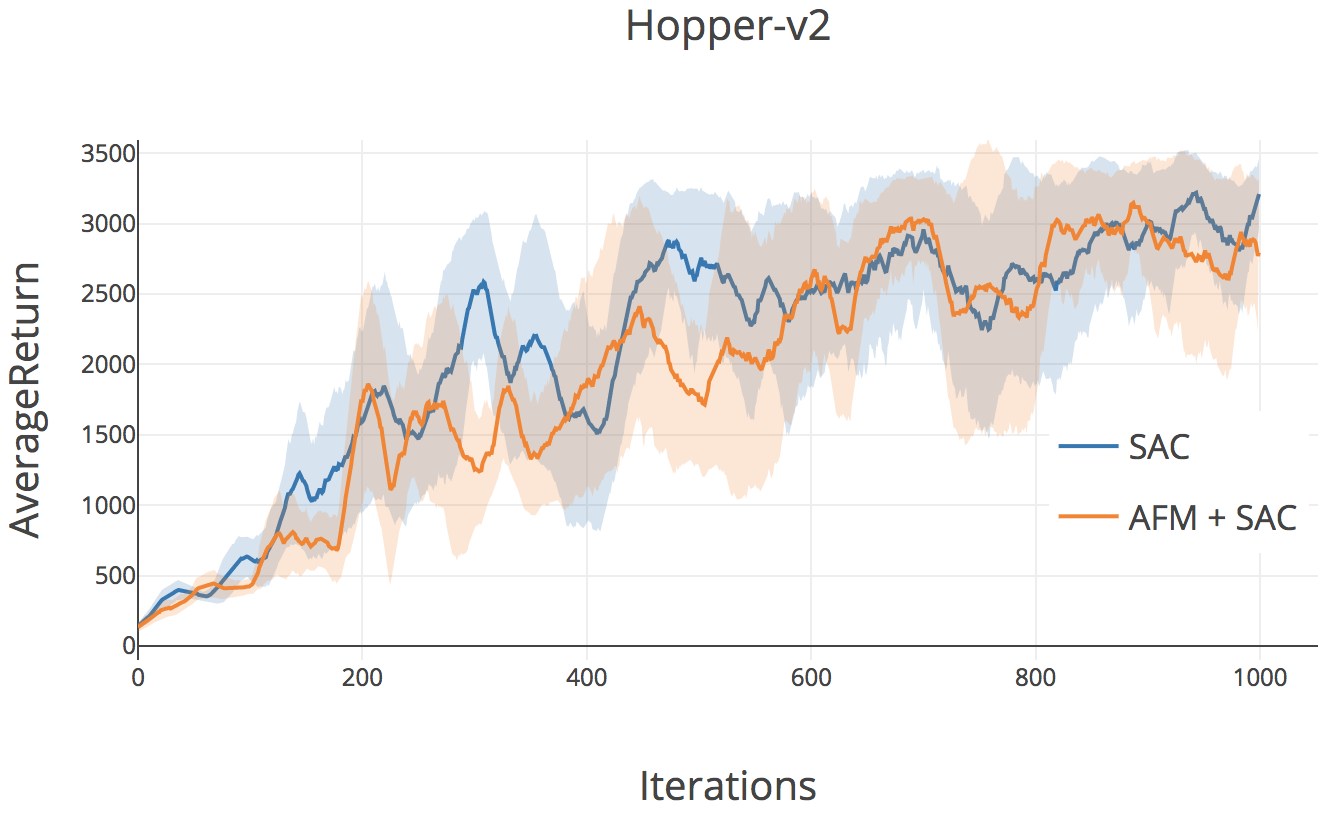}
    \caption{Hopper-v2}
\end{subfigure}
\begin{subfigure}[t]{0.33\textwidth}
    \includegraphics[scale=0.24]{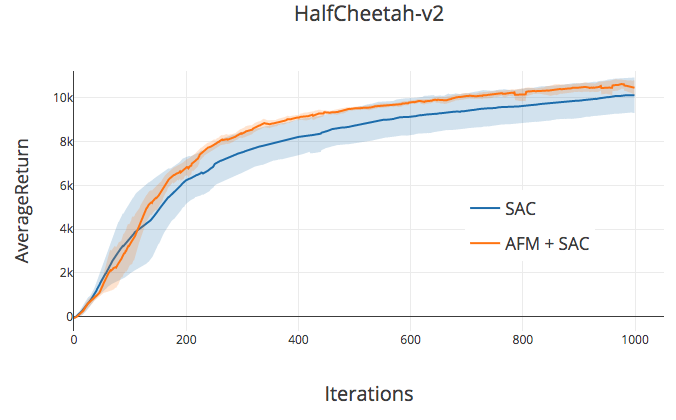}
    \caption{HalfCheetah-v2}
\end{subfigure}
\caption{Average Return for rollouts performed with a trained SAC model with temperature auto-tuning~\citep{haarnoja2018sacapps} with/without AFM. Note that on an average AFM performs slightly better and is always atleast at par with SAC. Each iteration on the x-axis corresponds to 1000 environment steps.}
\label{fig:sac_results_adv}
\end{figure*}

\begin{table}
    \centering
    \small{
    \begin{tabular}{|c|r|r|}
    \hline
    \textbf{Sampling distribution} & \textbf{Norm. Returns} & \textbf{Norm. Returns} \\
     & \textbf{(16, 16)} & \textbf{(64, 64)} \\
     \hline\hline
    None &  0.18 & 0.23 \\
    \hline
    Uniform(s, a) &  0.19 & 0.25 \\
    \hline
    $\pi(s, a)$ &  \textbf{0.45} & 0.39 \\
    \hline
    $\pi^*(s, a)$ & 0.30 & 0.21 \\
    \hline
    Prioritized(s, a) & 0.17 & 0.33 \\
    \hline
    PER~\cite{Schaul2015} & 0.42 & \textbf{0.49}\\
    \hline
    \textbf{AFM} (Ours) & 0.41 & \textbf{0.48} \\
    \hline
    \textbf{AFM + Sampling} (Ours) & 0.43 & \textbf{0.51} \\
    \hline
     \end{tabular}}
    \caption{\label{table:final}Average Performance of various sampling distributions for (16, 16) and (64, 64) neural nets in the setting with replay buffers where sampling errors, function approximation error and distribution shift, all of them coexist, averaged across 5 random seeds. PER, our AFM and on-policy sampling perform roughly at par on benchmark tasks in expectation when using (16, 16) architectures. However, note that  $\pi(s,a)$ is generally computationally intractable. Another point to note is that AFM performs as good as PER just by virtue of weighting and not \textit{sampling}. AFM+Sampling which is the sampling analogue of AFM works better than PER and AFM on average on tabular domains.}
    \vspace{-0.2in}
\end{table}

\section{Conclusions and Discussion}

From our analysis, we have several broad takeaways for the design of deep Q-learning algorithms. 

\textbf{Potential convergence issues} with Q-learning do not seem to be endemic empirically, but function approximation still has a strong impact on the solution to which these methods converge. This impact goes beyond just approximation error, suggesting that Q-learning methods do find suboptimal solutions (within the given function class) with smaller function approximators. However, expressive architectures largely mitigate this problem, suffer less from bootstrapping error, converge faster, and more stable with moving targets.

\textbf{Sampling error} can cause substantial overfitting problems with Q-learning. However, replay buffers and early stopping can mitigate this problem, and the biases incurred from small function approximators outweigh any benefits they may have in terms of overfitting. We believe the best strategy is to keep large architectures but carefully select the number of gradient steps used per sample. We showed that employing oracle early stopping techniques can provide huge benefits in the performance in Q-learning. This motivates the future research direction of devising early stopping techniques to dynamically control the number of gradient steps in Q-learning, rather than setting it as a hyperparameter as this can give rise to big difference in performance.

\textbf{The choice of sampling or weighting distribution} has significant effect on solution quality, even in the absence of sampling error. Surprisingly, we do not find on-policy distributions to be the most performant, but rather methods which have high state-entropy and spread mass uniformly among state-action pairs, seem to be highly effective for training. Based on these insights, we propose a new weighting distribution which balances high-entropy and state aliasing, AFM, that yields fair improvements in both tabular and continuous domains with state-of-the-art off-policy RL algorithms.

Finally, we note that there are many other topics in Q-learning that we did not investigate, such as overestimation bias and multi-step returns. We believe that these issues too could be studied in future work with our oracle-based analysis framework.

\section*{Acknowledgements}
We thank Vitchyr Pong and Kristian Hartikainen for providing us with implementations of RL algorithms. We thank Chelsea Finn for comments on an earlier draft of this paper. SL thanks George Tucker for helpful discussion. We thank Google, NVIDIA, and Amazon
for providing computational resources. This research was supported by Berkeley DeepDrive, NSF IIS-1651843 and IIS-1614653, the DARPA Assured Autonomy program, and ARL DCIST CRA W911NF-17-2-0181.



\bibliography{example_paper}
\bibliographystyle{icml2019}

\clearpage
\appendix
\section*{Appendices}

\section{Benchmark Tabular Domains}
\label{app:domains}

We evaluate on a benchmark of 8 tabular domains, selected for qualitative differences.

\textbf{4 Gridworlds}. The Gridworld environment is an NxN grid with randomly placed walls. The reward is proportional to Manhattan distance to a goal state (1 at the goal, 0 at the initial position), and there is a 5\% chance the agent travels in a different direction than commanded. We vary two parameters: the size ($16 \times 16$ and $64 \times 64$), and the state representations. We use a ``one-hot'' representation, an (X, Y) coordinate tuple (represented as two one-hot vectors), and a ``random'' representation, a vector drawn from $\mathcal{N}(0, 1)^N$, where N is the width or height of the Gridworld. The random observation significantly increases the challenge of function approximation, as significant state aliasing occurs.

\textbf{Cliffwalk}: Cliffwalk is a toy example from \citet{Schaul2015}. It consists of a sequence of states, where each state has two allowed actions: advance to the next state or return to the initial state. A reward of 1.0 is obtained when the agent reaches the final state. Observations consist of vectors drawn from $\mathcal{N}(0, 1)^{16}$.

\textbf{InvertedPendulum and MountainCar}: InvertedPendulum and MountainCar are discretized versions of continuous control tasks found in OpenAI gym~\citep{gym}, and are based on problems from classical RL literature. 
In the InvertedPendulum task, an agent must swing up an pendulum and hold it in its upright position. The state consists of the angle and angular velocity of the pendulum. Maximum reward is given when the pendulum is upright. The observation consists of the $\sin$ and $\cos$ of the pendulum angle, and the angular velocity.
In the MountainCar task, the agent must push a vehicle up a hill, but the hill is steep enough that the agent must gather momentum by swinging back and forth within a valley in order to reach the top. The state consists of the position and velocity of the vehicle.

\textbf{SparseGraph}: The SparseGraph environment is a 256-state graph with randomly drawn edges. Each state has two edges, each corresponding to an action. One state is chosen as the goal state, where the agent receives a reward of one.

\section{Fitted Q-iteration with Bounded Projection Error}
\label{app:bounded_error}

When function approximation is introduced to Q-iteration, we lose guarantees that our solution will converge to the optimal solution $Q^*$, because the composition of projection and backup is no longer guaranteed to be a contraction under any norm. However, this does not imply divergence, and in most cases it merely degrades the quality of solution found.

This can be seen by recalling the following result from~\citep{Bertsekas96}, that describes the quality of the solution obtained by fitted Q-iteration (FQI) when the projection error at each step is bounded. The conclusion is that FQI converges to an $\linfnorm$ ball around the optimal solution which scales proportionally with the projection error. While this statement does not claim that divergence cannot occur in general (this theorem can only be applied in retrospect, since we cannot always uniformly bound the projection error at each iteration), it nevertheless offers important intuitions on the behavior of FQI under approximation error. For similar results concerning $\mu$-weighted $\ltwonorm$ norms, see~\citep{munos2005erroravi}.

\begin{theorem}[Bounded error in fitted Q-iteration]
\label{thm:bounded_qi_bound}
Let the projection or Bellman error at each iteration of FQI be uniformly bounded by $\delta$, i.e. $\norm{\hat{Q}_{i+1} - \backup \hat{Q}_i}_\infty \le \delta\ \forall\ i$. Then, the error in the final solution is bounded as

\[  \lim_{i \to \infty} \norm{\hat{Q}_i - Q^*}_\infty \le \frac{\delta}{1-\gamma} \]
\end{theorem}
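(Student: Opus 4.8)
The plan is to combine the triangle inequality with the $\gamma$-contraction of the Bellman backup $\backup$ in the $\linfnorm$ norm, and then sum the resulting geometric recursion. First I would fix an iteration $i$ and write
\[ \norm{\hat{Q}_{i+1} - Q^*}_\infty \le \norm{\hat{Q}_{i+1} - \backup \hat{Q}_i}_\infty + \norm{\backup \hat{Q}_i - Q^*}_\infty . \]
The first term is at most $\delta$ by hypothesis. For the second term, since $Q^*$ is the fixed point of $\backup$ (so $\backup Q^* = Q^*$) and $\backup$ is a $\gamma$-contraction in the $\linfnorm$ norm, we get $\norm{\backup \hat{Q}_i - Q^*}_\infty = \norm{\backup \hat{Q}_i - \backup Q^*}_\infty \le \gamma \norm{\hat{Q}_i - Q^*}_\infty$. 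Setting $e_i \defeq \norm{\hat{Q}_i - Q^*}_\infty$, this yields the scalar recursion $e_{i+1} \le \delta + \gamma e_i$.

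Next I would unroll the recursion starting from $e_0$: an easy induction gives $e_i \le \gamma^i e_0 + \delta \sum_{k=0}^{i-1} \gamma^k \le \gamma^i e_0 + \frac{\delta}{1-\gamma}$. Letting $i \to \infty$, the term $\gamma^i e_0$ vanishes because $\gamma \in (0,1)$ (assuming $e_0 < \infty$), and the geometric series converges to $\frac{1}{1-\gamma}$, so $\limsup_{i\to\infty} e_i \le \frac{\delta}{1-\gamma}$, which is the stated bound (reading the $\lim$ in the theorem as a $\limsup$, since the iterates need not actually converge, only accumulate inside this ball).

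I do not expect a real obstacle: this is the standard perturbed-fixed-point-iteration estimate. The only points needing care are (i) applying the contraction to $\backup \hat{Q}_i - \backup Q^*$ rather than to $\hat{Q}_{i+1} - Q^*$ directly — which is precisely why the $\delta$ error term is peeled off first with the triangle inequality — and (ii) observing that $e_0$ must be finite for the finite-$i$ bound to be informative, though it is irrelevant in the limit. I might also remark that the $\frac{\delta}{1-\gamma}$ factor is tight in the worst case, matching the paper's point that function approximation degrades rather than destroys solution quality.
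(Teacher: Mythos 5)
Your proof is correct and is essentially the same argument the paper relies on: the paper gives no proof of its own, deferring to Chapter 6 of \citet{Bertsekas96}, and the standard argument there is exactly your perturbed-contraction estimate (triangle inequality to peel off the $\delta$ term, $\gamma$-contraction at the fixed point $Q^*$, then unrolling the geometric recursion). Your remark that the stated $\lim$ should be read as a $\limsup$, since the iterates need only accumulate in the ball rather than converge, is a fair and correct refinement of the statement.
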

\begin{proof}
See of Chapter 6 of~\citet{Bertsekas96}.
\end{proof}

We can use this statement to provide a bound on the performance of the final policy. 

\begin{corollary}
Suppose we run fitted Q-iteration, and let the projection error at each iteration be uniformly bounded by $\delta$, i.e. $\norm{\hat{Q}_{i+1} - \backup \hat{Q}_i}_\infty \le \delta\ \forall\ i$. Letting $\eta(\pi)$ denote the returns of a policy $\pi$, the the performance of the final policy is bounded as:

\[  \lim_{i \to \infty} |\eta(\pi^i) - \eta(\pi^*)| \le \frac{2\gamma\delta}{(1-\gamma)^2} \]
\end{corollary}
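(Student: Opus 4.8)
The approach is to combine Theorem~\ref{thm:bounded_qi_bound} with a standard estimate bounding the performance of a greedy policy by the error of the value estimate that induces it. By Theorem~\ref{thm:bounded_qi_bound}, in the limit the FQI iterate satisfies $\norm{\hat Q_i - Q^*}_\infty \le \varepsilon := \frac{\delta}{1-\gamma}$; passing to value estimates $\hat V_i(s) := \max_a \hat Q_i(s,a)$ preserves this, since $\abs{\max_a f(a) - \max_a g(a)} \le \max_a \abs{f(a) - g(a)}$ gives $\norm{\hat V_i - V^*}_\infty \le \norm{\hat Q_i - Q^*}_\infty \le \varepsilon$. It then remains to prove the lemma that the greedy policy $\pi^i$ induced by $\hat V_i$ obeys $\norm{V^* - V^{\pi^i}}_\infty \le \frac{2\gamma}{1-\gamma}\norm{\hat V_i - V^*}_\infty$. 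Granting this, and using that returns are the expectation of the value function under the initial-state distribution so that $\abs{\eta(\pi^i) - \eta(\pi^*)} = \abs{\mathbb{E}_{s_0}[V^{\pi^i}(s_0) - V^*(s_0)]} \le \norm{V^{\pi^i} - V^*}_\infty$, we conclude $\abs{\eta(\pi^i) - \eta(\pi^*)} \le \frac{2\gamma}{1-\gamma}\cdot\frac{\delta}{1-\gamma} = \frac{2\gamma\delta}{(1-\gamma)^2}$ in the limit.

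For the lemma, let $\backup$ also denote the Bellman optimality backup acting on state-value functions, and let $\backup_{\pi}$ be the policy-evaluation operator of $\pi$; both are $\gamma$-contractions in $\norm{\cdot}_\infty$ with respective fixed points $V^*$ and $V^\pi$, and by definition of greediness $\backup_{\pi^i}\hat V_i = \backup\hat V_i$. I would then telescope
\[
V^* - V^{\pi^i} = \backup V^* - \backup_{\pi^i} V^{\pi^i} = \bigl(\backup V^* - \backup\hat V_i\bigr) + \bigl(\backup_{\pi^i}\hat V_i - \backup_{\pi^i} V^{\pi^i}\bigr),
\]
take $\norm{\cdot}_\infty$ of both sides, apply the contraction bound to each of the two terms, and use the triangle inequality $\norm{\hat V_i - V^{\pi^i}}_\infty \le \norm{\hat V_i - V^*}_\infty + \norm{V^* - V^{\pi^i}}_\infty$, obtaining $\norm{V^* - V^{\pi^i}}_\infty \le 2\gamma\norm{\hat V_i - V^*}_\infty + \gamma\norm{V^* - V^{\pi^i}}_\infty$; solving for $\norm{V^* - V^{\pi^i}}_\infty$ yields the lemma, and hence the corollary.

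The step I expect to be delicate is securing the factor $\gamma$ in the numerator rather than a crude factor of $2$: the naive route of bounding the one-step suboptimality $V^*(s) - Q^*(s,\pi^i(s))$ directly by $2\varepsilon$ and unrolling gives only the weaker $\frac{2\delta}{(1-\gamma)^2}$, so one must route the telescoping identity through $\backup\hat V_i$ and exploit $\backup\hat V_i = \backup_{\pi^i}\hat V_i$ --- i.e., treat $\pi^i$ as greedy with respect to the one-step Bellman lookahead $\argmax{a}\{R(s,a) + \gamma\,\mathbb{E}_{s'}[\hat V_i(s')]\}$. In the Q-function form used here, $\pi^i = \argmax{a}\hat Q_i(s,a)$ agrees with this lookahead up to the $O(\delta)$ gap between $\hat Q_i$ and $\backup\hat Q_{i-1}$, which is absorbed by the statement; I would also note that Theorem~\ref{thm:bounded_qi_bound} is a $\limsup$-type bound, so the corollary is correspondingly asymptotic, matching the ``$\lim_{i\to\infty}$'' in the claim.
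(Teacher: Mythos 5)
Your route is essentially the paper's: the paper proves this corollary in one line by substituting Theorem~\ref{thm:bounded_qi_bound} into Proposition 6.1 of \citet{Bertsekas96}, and your ``lemma'' $\norm{V^* - V^{\pi^i}}_\infty \le \frac{2\gamma}{1-\gamma}\norm{\hat{V}_i - V^*}_\infty$, proved by telescoping through $\backup \hat{V}_i = \backup_{\pi^i}\hat{V}_i$ and contracting, is precisely a self-contained proof of that proposition; the substitution step and the passage from value functions to returns are the same. So the main body of your argument is correct and matches the paper, just with the cited ingredient reproved rather than quoted.

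The one genuine problem is your closing remark that taking $\pi^i = \argmax{a}\hat{Q}_i(s,a)$ ``agrees with this lookahead up to the $O(\delta)$ gap \ldots which is absorbed by the statement.'' It is not absorbed. The factor $\gamma$ in the numerator is available exactly because the lookahead-greedy policy uses the \emph{exact} reward and transition in its first step, so the estimation error enters only through the discounted future; this is the greediness identity $\backup_{\pi^i}\hat{V}_i = \backup\hat{V}_i$ your telescoping relies on, and it holds for the policy greedy w.r.t.\ $\backup\hat{Q}_i$, not for the policy greedy w.r.t.\ $\hat{Q}_i$. The $\hat{Q}_i$-greedy policy is only $2\delta$-greedy w.r.t.\ that lookahead, and pushing the extra $2\delta$ through the same recursion adds $\frac{2\delta}{1-\gamma}$, i.e.\ degrades the constant to $\frac{2\delta}{(1-\gamma)^2}$. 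This loss is real, not an artifact of the proof: in a one-state MDP with two self-looping actions of reward $1$ and $0$, the FQI-consistent ``fixed point'' $\hat{Q}(a_1) = 1 + \gamma M - \delta$, $\hat{Q}(a_2) = \gamma M + \delta$ with $\delta = \tfrac12$ satisfies $\norm{\hat{Q} - \backup\hat{Q}}_\infty \le \delta$ and $\norm{\hat{Q}-Q^*}_\infty = \frac{\delta}{1-\gamma}$, yet the $\hat{Q}$-greedy policy can select the zero-reward action forever, incurring loss $\frac{1}{1-\gamma} = \frac{2\delta}{1-\gamma}$, which exceeds $\frac{2\gamma\delta}{(1-\gamma)^2}$ for every $\gamma < \tfrac12$ (meanwhile the lookahead-greedy policy is optimal there). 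So to obtain the stated constant you should read $\pi^i$, as \citet{Bertsekas96} do, as the policy greedy with respect to the exact backup of the estimate (equivalently, the one-step lookahead of $\hat{V}_i$) --- the case your lemma actually covers --- and drop the claim that the $\hat{Q}_i$-greedy case follows with the same constant.
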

\begin{proof}
This result is obtained by substituting Thm.~\ref{thm:bounded_qi_bound} into Propositon 6.1 of ~\citet{Bertsekas96}.
\end{proof}

\subsection{Unbounded divergence in FQI}
\label{app:unbounded_divergence}
Because $\ltwonorm$ norms are bounded by the $\linfnorm$ norm, Thm.~\ref{thm:bounded_qi_bound} implies that \textit{unbounded} divergence is impossible when weighting distribution has positive support at all states and actions (i.e. $\mu(s,a) > 0\ \forall\ (s, a) \in (\mathcal{S}, \mathcal{A})$), and the projection is non-expansive in the $\ltwonorm$ norm (such as when using linear approximators). 

We can bound the $\mu$-weighted $\ltwonorm$ in terms of the $\linfnorm$ as follows: $\normtmu{\cdot} \le \frac{1}{\min \mu(s,a)} \normt{\cdot} \le \frac{\sqrt{|\mathcal{S}||\mathcal{A}|}}{\min \mu(s,a)} \norm{\cdot}_\infty$. Thus, we can apply Thm.~\ref{thm:bounded_qi_bound} with $\delta = \frac{\sqrt{|\mathcal{S}||\mathcal{A}|}}{\min \mu(s,a)}$ to show that unbounded divergence is impossible. Note that because this bound scales with the size of the state and action spaces, it is fairly loose in many practical cases, and practitioners may nevertheless see Q-values grow to large values (tighter bounds concerning L2 norms can be found in \cite{munos2005erroravi}, which depend on the transition distribution). It also suggests that distributions which are fairly uniform (so as to maximize the denominator) can perform well.

When the weighting distribution $\mu$ does not have support over all states and actions, divergence can still occur, as noted in the counterexamples such as Section 11.2 of ~\citet{suttonrlbook}. In this case, we consider two states (state 1 and 2) with feature vectors 1 and 2, respectively, and a linear approximator with parameter $w$. There exists a single action with a deterministic transition from state 1 to state 2, and we only sample the transition from state 1 to state 2 (i.e. $\mu(s,a)$ is 1 for state 1 and 0 for state 2). All rewards are 0. In this case, the projected Bellman backup takes the form:
\[w^{t+1} = \argmin{w} (w - 2\gamma w^t)^2\]
Which will cause unbounded growth $\lim_{t \to \infty} w_t = \infty$ when iterated, provided $\gamma > 0.5$. However, if we add a transition from state 2 back to itself or to state 1, and place nonzero probability on sampling these transitions, divergence can be avoided.

\section{$\alpha$-smoothed Q-iteration}
\label{app:alpha_smoothed_q}

In this section we show that the $\alpha$-smoothed Bellman backup introduced in Section~\ref{sec:analysis_nonstationarity} is still a valid Q-iteration method, in that it is a contraction (for $1 \ge \alpha > 0$) and thus converges to $Q^*$.

We define the $\alpha$-smoothed Bellman backup as:
\[ \backup^{\alpha}Q = \alpha \backup Q + (1-\alpha)Q\]

\begin{theorem}[Contraction rate of the $\alpha$-smoothed Bellman backup]
\label{thm:alpha_smoothed_q}
$\backup^{\alpha}$ is a $1-\alpha+\gamma\alpha$-contraction:
\[ \norm{\backup^{\alpha}Q_1 - \backup^{\alpha}Q_2}_\infty \le (1-\alpha+\gamma\alpha)\norm{Q_1 - Q_2}_\infty\]
\end{theorem}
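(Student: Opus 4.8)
The plan is to obtain the contraction estimate directly from the triangle inequality together with the $\gamma$-contraction property of the ordinary Bellman backup $\backup$ recalled in Section~\ref{sec:backrgound}. First I would expand the difference using the definition $\backup^{\alpha}Q = \alpha\backup Q + (1-\alpha)Q$ and regroup, so that
\[
\backup^{\alpha}Q_1 - \backup^{\alpha}Q_2 = \alpha\left(\backup Q_1 - \backup Q_2\right) + (1-\alpha)\left(Q_1 - Q_2\right).
\]
Taking $\norm{\cdot}_\infty$ of both sides and applying the triangle inequality, together with the fact that $1-\alpha \ge 0$ (which holds precisely because $0 < \alpha \le 1$, so no absolute value flips sign when pulling the scalars out of the norm), gives
\[
\norm{\backup^{\alpha}Q_1 - \backup^{\alpha}Q_2}_\infty \le \alpha\norm{\backup Q_1 - \backup Q_2}_\infty + (1-\alpha)\norm{Q_1 - Q_2}_\infty .
\]

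Second, I would substitute the known bound $\norm{\backup Q_1 - \backup Q_2}_\infty \le \gamma\norm{Q_1 - Q_2}_\infty$ (the $\linfnorm$-contraction of $\backup$) into the right-hand side, yielding $\norm{\backup^{\alpha}Q_1 - \backup^{\alpha}Q_2}_\infty \le (\alpha\gamma + 1 - \alpha)\norm{Q_1 - Q_2}_\infty$, which is exactly the claimed coefficient $1-\alpha+\gamma\alpha$. Third, to justify the surrounding claim that $\backup^{\alpha}$ is still a valid Q-iteration operator, I would observe that $1-\alpha+\gamma\alpha < 1$ whenever $\alpha > 0$ and $\gamma < 1$, so $\backup^{\alpha}$ is a genuine contraction on the complete space of bounded Q-functions under $\norm{\cdot}_\infty$; by the Banach fixed-point theorem it has a unique fixed point and iterating it converges there from any initialization. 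Since $\backup Q^* = Q^*$ implies $\backup^{\alpha}Q^* = \alpha Q^* + (1-\alpha)Q^* = Q^*$, that unique fixed point is $Q^*$ itself.

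There is essentially no serious obstacle here: the argument is a one-line application of the triangle inequality and the contraction of $\backup$. The only points requiring care are (i) the sign condition $1-\alpha \ge 0$, which is why the statement restricts to $0 < \alpha \le 1$, and (ii) noting that strict contractivity (and hence convergence rather than mere nonexpansiveness) needs $\alpha > 0$ together with $\gamma < 1$.
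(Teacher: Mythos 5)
Your proof is correct and follows essentially the same route as the paper's: decompose $\backup^{\alpha}Q_1 - \backup^{\alpha}Q_2 = \alpha(\backup Q_1 - \backup Q_2) + (1-\alpha)(Q_1 - Q_2)$, apply the triangle inequality (using $1-\alpha \ge 0$), and invoke the $\gamma$-contraction of $\backup$. Your additional remarks on the Banach fixed-point theorem and $Q^*$ being the fixed point are a correct, if unstated-in-the-proof, justification of the surrounding claim that $\backup^{\alpha}$ remains a valid Q-iteration operator.
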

\begin{proof}
This statement follows from straightforward application of the triangle rule and the fact that $\backup$ is a $\gamma$-contraction:
\begin{align*}
&\norm{\backup^{\alpha}Q_1 - \backup^{\alpha}Q_2}_\infty \\
&= \norm{(\alpha \backup Q_1 - (1-\alpha)Q_1) - (\alpha \backup Q_2 - (1-\alpha)Q_2 }_\infty \\
&= \norm{\alpha (\backup Q_1 - \backup Q_2) + (1-\alpha)(Q_1-Q_2) }_\infty \\
&\le \alpha \norm{\backup Q_1 - \backup Q_2}_\infty + (1-\alpha)\norm{Q_1-Q_2 }_\infty \\
&\le \alpha \gamma \norm{Q_1 - Q_2}_\infty + (1-\alpha)\norm{Q_1-Q_2 }_\infty \\
&=  (1-\alpha + \alpha\gamma) \norm{Q_1 - Q_2}_\infty
\end{align*}
\end{proof}

\begin{figure*}[t!]
\begin{subfigure}[t]{1.0\textwidth}
    \centering
    \includegraphics[width=0.99\textwidth]{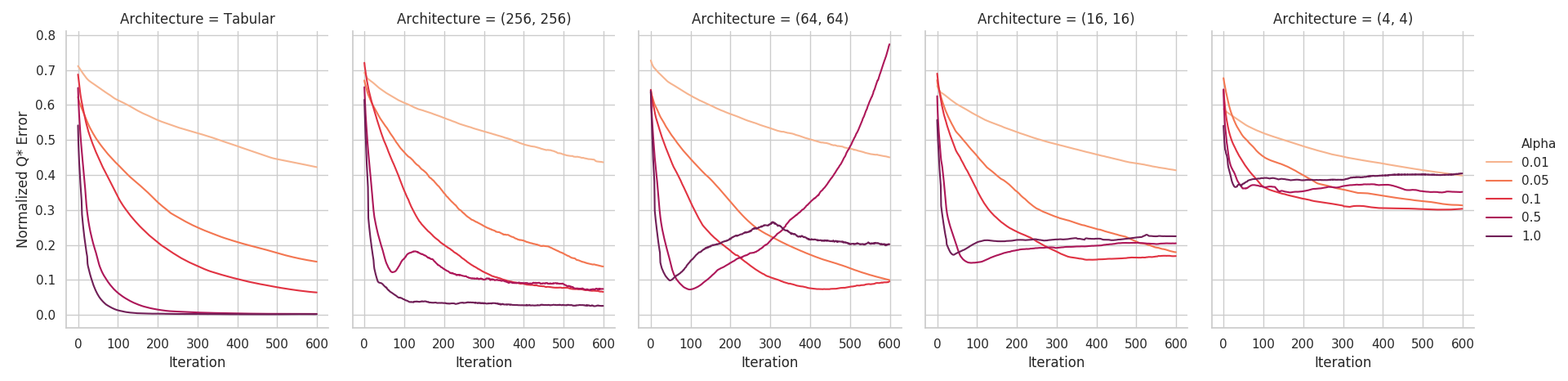}
\end{subfigure}%

\begin{subfigure}[t]{1.0\textwidth}
    \centering
    \includegraphics[width=0.99\textwidth]{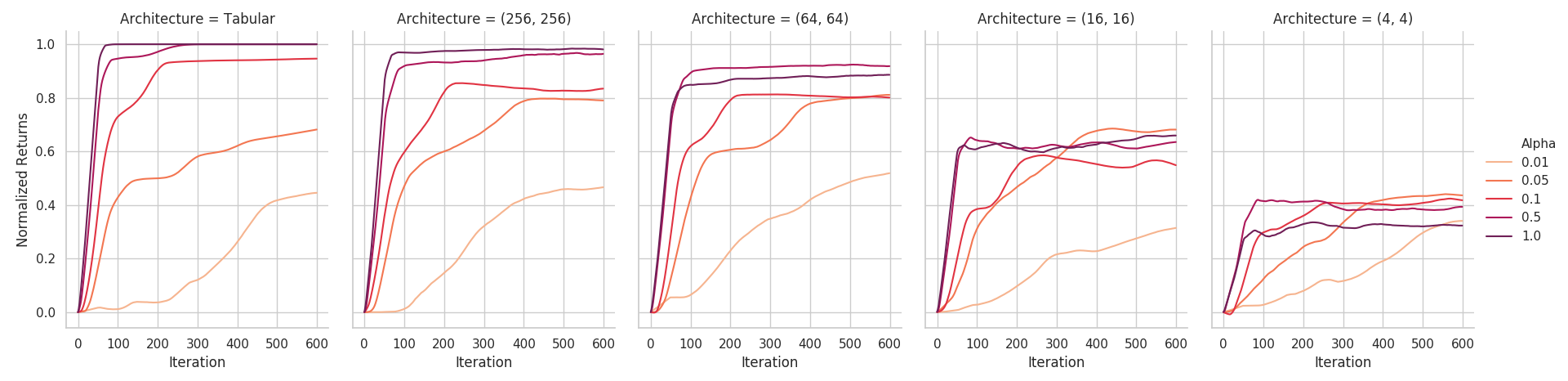}
\end{subfigure}%
\caption{\label{fig:smooth_fqi} Results for the $\alpha$-smoothed Bellman backup experiment. Normalized $\linfnorm$ norm error to $Q^*$ and normalized returns plotted for different values of $\alpha$ and architectures. Values are averaged over all domains and 5 seeds. For large architectures, higher values of $\alpha$ result in faster convergence and higher asymptotic returns. However, for smaller architectures, low values of $\alpha$ slightly outperform higher values.}
\end{figure*}

\section{Adversarial Feature Matching (AFM): Detailed Explanation and Practical Implementation}
\label{app:adversarial}
As described in section~\ref{sec:afm}, we devise a novel weighting scheme for the Bellman error objective based on an adversarial minimax game. The adversary computes weights $p_\phi(s, a)$ (representing the weighting distribution $\mu$), for the Bellman error: $(Q_{\theta, w}(s, a) - y(s, a))^2$. Recalling from Section~\ref{sec:afm}, the optimization problem is given by:
\begin{multline*}
    \min_{\theta, w} \max_{\phi} \mathbb{E}_{p_\phi(s, a)} [(Q_{w, \theta} (s, a) - y(s, a))^2]\\
s.t.~~ \big\vert\big\vert \mathbb{E}_{p_\phi(s, a)}[\Phi(s)] - \frac{\sum_i \Phi(s_i)}{N} \vert\vert \leq \varepsilon
\end{multline*}
where $\Phi(s)$ are the state features learned by the Q-function approximator. $\Phi(s)$ is easy to extract out of the multiheaded $Q(s,a)$ ($Q(s, a) = w_a^T\Phi(s)$) model typically used for discrete action control, as one choice is to let $\Phi(s)$ be the output of the penultimate layer of the Q-network. For continuous control tasks, however, we model $\Phi(s, a)$ (which is a function of the actions as well) as state-only features are unavailable, unless separately modeled. This can also be interpreted as modelling a feature matching constraint on the gradient of $Q(s, a)$ with respect to the last linear parameters $w_a$. A possible extension is to take into account the entire gradient as the features in the feature matching constraint, that is, $\nabla_{w, \theta} Q_{w, \theta}(s, a)$.

This choice of the constraint is suitable and can be interpreted in two ways. First, an adversary constrained in this manner has enough power to exploit the Q-network at states which get aliased under the chosen function class, thereby promoting more separable feature learning and reducing some negative aspects of function approximation that can arise in Q-learning. This is also similar in motivation to \citep{martha2018sparse}. Second, this feature constraint also bears a similarity the Maximum Mean Discrepancy (MMD) distance between two distributions $P(x)$ and $Q(x)$ that can be written as $\text{MMD}^2(P, Q) := ||\mathbb{E}_{P}[\Phi] - \mathbb{E}_{Q}[\Phi]||_{\mathcal{H}}$, where the set of functions $\Phi$ is the canonical feature map, $\Phi: \mathbb{R}^n \rightarrow \mathcal{H}$ (from real space to the RKHS). In our context, this is analogous to optimizing a distance between the adversarial distribution $p_\phi(s, a)$ and the replay buffer distribution $p_{rb}(s, a)$ (as the average is a Monte-Carlo estimator of the expected $\Phi$ under the replay buffer distribution $p_{rb}(s, a)$). In the light of these arguments, AFM, and other associated methods that take into account the properties of the function approximator into account (for example, $\Phi$ here), can greatly reduce the bias incurred due to function approximation in the due course of Q-learning/FQI, as depicted in \ref{fig:function_approx}.

\paragraph{Solving the optimization} We solve this saddle point problem using alternating dual gradient descent. We first solve the inner maximization problem, and then use its solution to then solve the outer minimization problem. We first compute the Lagrangian for the maximization, $\mathcal{L}_{\text{inner}}({\phi; \lambda, \theta})$ by introducing a dual variable $\lambda$,
\begin{multline*}
    \mathcal{L}_{\text{inner}}({\phi; \lambda, \theta}) =  -\mathbb{E}_{p_\phi(s, a)} [(Q_\theta (s, a) - y(s, a))^2] +\\ \lambda \big( \vert\vert \mathbb{E}_{p_\phi(s, a)}[\Phi(s)] - \frac{\sum \Phi(s)}{N} \vert\vert - \varepsilon \big)
\end{multline*}
(Note that this Lagrangian is flipped in sign because we first convert the maximization problem to standard minimization form.) We now solve the inner problem using dual gradient descent. We then plug in the solutions (approximate solutions obtained after gradient descent), $(p^*, \lambda^*)$ into the Lagrangian, to then solve the outside minimization over $\theta$. Note that while $\Phi$  depends on $\theta$ (as it is the feature layer of the Q-network), we don not backpropagate through $\Phi$ while solving the minimization. This improves stability of the Q-network training in practice and to makes sure that Q-function is only affected by FQI updates. In practice, we take up to 10 gradient steps for the inner problem every 1 gradient step of the outer problem. The algorithm is summarized in Algorithm~\ref{alg:afm}. Our results provided in the main paper and here don't particularly assume any other tricks like Optimistic Gradient~\citep{daskalakis2018training}, using exponential moving average of the parameters~\citep{yaz2018the}. Our tabular experiments seemed to benefit some what using these tricks.   

\begin{algorithm}
\caption{\label{alg:afm}AFM with Exact-FQI}
\begin{algorithmic}[1]
    \STATE Initialize Q-value approximator $Q_{\theta, w}(s,a)$, \textbf{projection distribution $\mu_{\phi}(s, a)$, threshold $\varepsilon$}
    \FOR{step $t$ in \{1, \dots, N\}}
    \STATE Initialize Q-value approximator $Q_{\theta, w}(s,a)$.
    \STATE Evaluate $Q_{\theta^t, w^t}(s,a)$ at all states.
    \STATE Compute exact target values at all states. \\
    $y(s,a) = r(s,a) + \gamma E_{s'}[ V_{\theta^t}(s')]$
    \STATE {Minimize the \emph{negative} projection loss with respect to $\phi$ subject to the feature $\Phi$ matching constraint exactly over all states and actions}
    \begin{multline*}
        \phi_{t+1} \leftarrow \arg \min_{\phi} -\mathbb{E}_{p_\phi}[ (Q_{\theta, w}(s, a) - y(s, a))^2] \\
        \text{s.t. } ||\mathbb{E}_{\mu}[\Phi(s, a)] - \frac{\Phi(s, a)}{N}|| \leq \varepsilon\\
    \end{multline*}
    \vspace{-10pt}
    Maximize the Dual Loss w.r.t. $\lambda$.
    \begin{multline*}
        \lambda_{t+1} \leftarrow \arg \max_{\lambda \geq 0} \lambda (||\mathbb{E}_{\mu}[\Phi(s, a)] - \frac{\Phi(s, a)}{N}|| - \varepsilon) 
    \end{multline*}
    \STATE Repeat Step 6 for K steps (K $\in [1, 10]$).
    \STATE Minimize projection loss with respect to $\mu$: \\
    $\theta^{t+1}, w^{t+1} \leftarrow \argmin{\theta, w} E_{p_\phi}[ (Q_{\theta, w}(s,a) - y(s,a))^2]$
    \ENDFOR
\end{algorithmic}
\end{algorithm}

\paragraph{Practical implementation with replay buffers} We incorporate this weighting/sampling distribution into Q-learning in the setting with replay buffers and with state-action sampling. We evaluate the \textbf{weighting version} of our method, AFM, where, we usually sample a large batch $B$ of state-action pairs from a usual replay buffer used in Q-learning, but use importance weights to then match $p_\phi(s,a)$ in expectation. Thus, we use a parametric function approximator to model $\frac{p_\phi(s, a)}{p_{rb}(s, a)}$ -- that is, the importance weights of the adversarial distribution with respect to the replay buffer distribution $p_{rb}(s, a)$. Mathematically, we estimate: $E_{p_\phi(s, a)}[\delta(s, a)] := E_{p_{rb}(s, a)}[\frac{p_\phi(s, a)}{p_{rb}(s, a)} \delta(s, a)]$, where $\delta(s, a) = (Q_{\theta, w} (s, a) - y(s, a))^2$. The latter expectation is then approximated using a set of finite samples. It has been noted in literature that importance sampling (IS) suffers from high variance especially if the number of samples is small. Hence, we use the self-normalized importance sampling estimator, which averages the importance weights in a set of samples or a large number of samples. That is, let $w_{p/p_{rb}} = \frac{p_\phi(s, a)}{p_{rb}(s, a))}$, then instead of using $w_{p/p_{rb}}$ as the importance weights, we use $\Tilde{w}_{p/p_{rb}}(x) = \frac{w_{p/p_{rb}}(x)}{\sum_{y \in B} w_{p/p_{rb}}(y)}$ (where $x$ and $y$ represent state-action tuples; concisely mentioned for visual clarity) as the importance weights. We also regularize the second-order Renyi Divergence between $p_{rb}$ and $p_\phi$ for stability. Mathematically, it can be shown that this is a lower bound on the true expectation of $\delta$ under $p_\phi$, which is being estimated using importance sampling. This result has also been shown in~\citep{metelli2018nips} (Theorem 4.1), where the authors use this lower bound in policy optimization via importance sampling. We state the theorem below for completeness.

\begin{theorem}
\textbf{\citep{metelli2018nips}} Let $P$ and $Q$ be two probability measures on the measurable space $(X , F)$ such that
$P << Q$ and $d_2(P ||Q) < +\infty$. Let $x_1, x_2, \cdots , x_N$ be i.i.d. random variables sampled from $Q$, and $f : X \rightarrow \mathbb{R}$ be a bounded function. Then, for any $0 < \delta \leq 1$ and $N > 0$ with probability at least $1 - \delta$ it holds that:
\begin{multline*}
    \mathbb{E}_{x \sim Q}[f(x)] \geq \frac{1}{N} \sum w_{P/Q}(x_i) f(x_i) -\\ ||f||_{\infty} \sqrt{\frac{(1 - \delta) d_2(P||Q)}{N\delta}}
\end{multline*}
where $d_2(P||Q) \propto \mathbb{E}_{Q} \big[ (\frac{P(x)}{Q(x)})^2 \big]$ is the exponentiated second-order Renyi Divergence between $P$ and $Q$.
\end{theorem}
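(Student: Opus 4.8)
The statement to prove is the concentration bound from Metelli et al., which lower bounds $\mathbb{E}_{x\sim Q}[f(x)]$ by the self-normalized-free importance sampling estimate minus a term scaling with the second-order R\'enyi divergence $d_2(P\|Q)$ and $\|f\|_\infty$. I would approach this via a one-sided Chebyshev-type (Cantelli) inequality applied to the importance-weighted estimator $\hat{\mu}_N = \frac{1}{N}\sum_i w_{P/Q}(x_i) f(x_i)$, where $w_{P/Q}(x) = \frac{P(x)}{Q(x)}$.

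The plan is as follows. First I would note that the estimator is unbiased: $\mathbb{E}_{x\sim Q}[w_{P/Q}(x) f(x)] = \mathbb{E}_{x\sim P}[f(x)]$, since $P \ll Q$. Wait --- the theorem as stated bounds $\mathbb{E}_{x\sim Q}[f(x)]$, but the importance weights $w_{P/Q}$ push the $Q$-expectation toward a $P$-expectation; I would reconcile this by reading the roles of $P,Q$ as in the application (sampling from $Q$, target $P$), so that $\mathbb{E}_{x\sim P}[f(x)]$ is the quantity estimated --- and proceed with that. Second, I would bound the variance of a single term: $\mathrm{Var}_{x\sim Q}(w_{P/Q}(x)f(x)) \le \mathbb{E}_{x\sim Q}[w_{P/Q}(x)^2 f(x)^2] \le \|f\|_\infty^2 \, \mathbb{E}_{x\sim Q}[w_{P/Q}(x)^2] = \|f\|_\infty^2 \, d_2(P\|Q)$, using the definition of the exponentiated second-order R\'enyi divergence $d_2(P\|Q) = \mathbb{E}_{x\sim Q}[(P(x)/Q(x))^2]$. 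Hence $\mathrm{Var}(\hat\mu_N) \le \|f\|_\infty^2 d_2(P\|Q)/N$ by independence. Third, I would apply Cantelli's inequality in the form $\Pr(\hat\mu_N \le \mathbb{E}[\hat\mu_N] - t) \le \frac{\mathrm{Var}(\hat\mu_N)}{\mathrm{Var}(\hat\mu_N) + t^2}$, set the right-hand side equal to $\delta$, and solve for $t$. This yields $t = \sqrt{\mathrm{Var}(\hat\mu_N)\,(1-\delta)/\delta} \le \|f\|_\infty \sqrt{(1-\delta)\,d_2(P\|Q)/(N\delta)}$, which is exactly the claimed deviation term. Rearranging gives the stated high-probability lower bound.

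The main obstacle --- really the only subtle point --- is getting the probabilistic tail inequality to produce precisely the constant $\sqrt{(1-\delta)/(N\delta)}$ rather than the looser $\sqrt{1/(N\delta)}$ that a plain one-sided Chebyshev bound would give; this is why Cantelli's (one-sided Chebyshev) inequality is the right tool rather than Chebyshev's two-sided inequality, and I would make sure to invoke it in the sharp form $\Pr(X - \mathbb{E}X \le -t) \le \sigma^2/(\sigma^2+t^2)$. A secondary bookkeeping matter is confirming that the variance bound only needs $f$ bounded and $d_2(P\|Q) < \infty$ (no further integrability), which is immediate from the chain of inequalities above. Since all of this is essentially a direct citation of Theorem 4.1 of Metelli et al., I would keep the proof short: state unbiasedness, bound the variance by $\|f\|_\infty^2 d_2(P\|Q)$, invoke Cantelli, solve for the threshold, and rearrange. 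No delicate estimates are required beyond these routine steps.
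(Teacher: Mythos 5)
The paper offers no proof of this statement at all --- it is restated ``for completeness'' from Metelli et al.\ (2018, Theorem 4.1) --- so the relevant comparison is with that original proof, and your proposal reproduces it exactly: unbiasedness of the importance-weighted estimator, the variance bound $\mathrm{Var}_{x\sim Q}\bigl(w_{P/Q}(x)f(x)\bigr) \le \|f\|_\infty^2\, d_2(P\|Q)$, and Cantelli's one-sided inequality to obtain the sharp $\sqrt{(1-\delta)/(N\delta)}$ factor. You were also right to flag the misprint in the statement as given here: the left-hand side should be $\mathbb{E}_{x\sim P}[f(x)]$, since the weighted sum with $x_i \sim Q$ estimates the $P$-expectation. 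One small correction to your write-up: the bad event for the claimed bound $\mathbb{E}_{x\sim P}[f(x)] \ge \hat\mu_N - t$ is the estimator \emph{overshooting} its mean, so you should invoke Cantelli in the upper-tail form $\Pr\bigl(\hat\mu_N - \mathbb{E}[\hat\mu_N] \ge t\bigr) \le \sigma^2/(\sigma^2+t^2)$ rather than the lower-tail form you wrote; since Cantelli holds for either tail, this is a one-line fix and the rest of the argument goes through unchanged.
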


Hence, our objective for the inner loop now becomes: $ \max_\phi \mathbb{E}_{p_\phi(s, a)} [\delta(s, a)] = \max_\phi \mathbb{E}_{p_{rb}}[\frac{p_\phi(s, a)}{p_{rb}(s, a)} \delta(s, a)]$ is now computed using samples with an additional renyi regularisation term. Since, we end up modeling this ratio,  $f_\phi(s, a) := \frac{p_\theta(s, a)}{p_{rb}(s, a)}$ through out parameteric model, we can hence easily compute an estimator for the Renyi divergence term. The overall lower bound inner maximization problem is:
\begin{multline*}
\max_\phi \frac{1}{N} \sum_{(s, a) \sim p_{rb}}[f_\phi(s, a) (Q_{\theta, w} (s, a) - y(s, a))^2] -\\ C \sqrt{\frac{(1 - \delta) (\frac{\sum f_\phi(s, a))^2}{N})}{N\delta}} \\
\text{s.t.~~} \vert\vert \frac{\sum_{s, a \in p_{rb}}[f_\phi(s, a) \Phi(s)]}{N} - \frac{\sum_{s,a \in p_{rb}} \Phi(s)}{N} \vert\vert  \leq \varepsilon
\end{multline*}

We found that this Renyi penalty helped stabilize training. In practice, we model the importance weights: $f_\phi(s, a)$ as a parametric model with an identical architecture to the Q-network. We use parameter clipping for $f_\phi(s, a)$, where the parameter are clipped to $[-0.1, 0.1]$, analogous to Wasserstein GANs~\citep{pmlr-v70-arjovsky17a}. We also found that self-normalization during importance sampling has a huge practical benefit. Note that as the true $\linfnorm$ norm of the Bellman error is not known, for computing $C$ in the Renyi Divergence term, and hence we either replace it by constant, or compute a stochastic approximation to the $\linfnorm$ norm over the current batch. We found the former to be more stable, and hence, used that in all our experiments. This coefficient of the Renyi divergence penalty is tuned uniformly between $[0.0, 0.25]$. The learning rate for the adversary was chosen to be 1e-4 for the tabular environments, and 5e-4 for TD3. The batch size for our algorithm was chosen to be 128 for the tabular environments and 500 for TD3/SAC. Note that a larger batch size ensures smoothness in the minmax optimization problem. We also found that instead of having a $1D$ Lagrange multiplier for the feature matching constraint, having $d$ Lagrange multipliers for constraining each of the individual dimensions of the features $\Phi \in \mathbb{R}^d$ also helps very much. This is to ensure that the hyperparameters remain the same across different architectures regardless of the dimension of the penultimate layer of the Q-network. The algorithm in this case is exactly the same as the algorithm before with a vector valued dual variable $\lambda$. We used TD3 and SAC implementations from rlkit (\url{https://github.com/vitchyr/rlkit/tree/master/rlkit})

\onecolumn{\section{Function approximation analysis on Mujoco Tasks}}
\label{appendix:sac_size_plots}
As discussed in Section \ref{sec:function_approx}, we validate our findings on the effect of function approximation on 3 MuJoCo tasks from OpenAI Gym with the SAC algorithm from the author's implementation at \cite{haarnoja2018sacapps}. We observe that bigger networks learn faster and better in general.
\begin{figure*}[ht]
    \begin{subfigure}[t]{0.30\textwidth}
        \centering
        \includegraphics[height=1.2in]{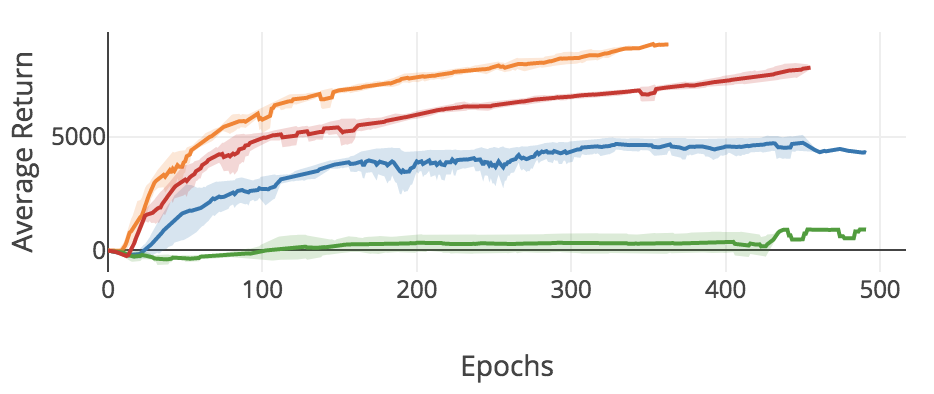}
        \caption{\centering{HalfCheetah-v2}}
    \end{subfigure}%
    \begin{subfigure}[t]{0.30\textwidth}
        \centering
        \includegraphics[height=1.2in]{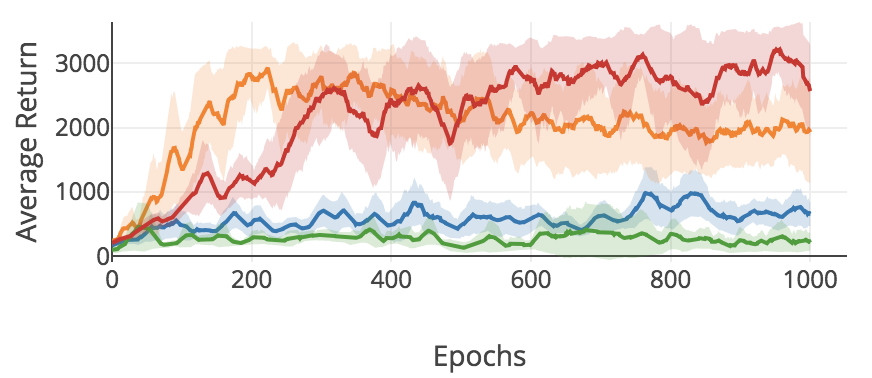}
        \caption{\centering{Hopper-v2}}
    \end{subfigure}
    \begin{subfigure}[t]{0.39\textwidth}
        \centering
        \includegraphics[height=1.2in]{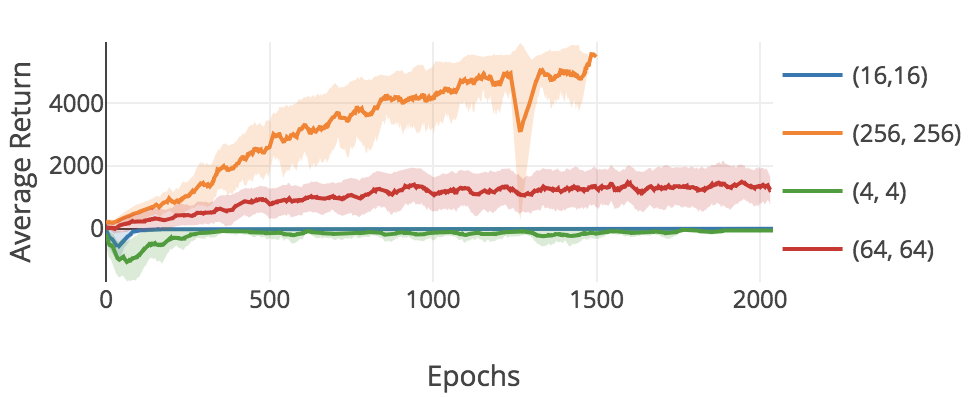}
        \caption{\centering{Ant-v2}}
    \end{subfigure}
    \caption{\label{fig:size_sac}Performance of different size architectures on 3 benchmark MuJoco tasks from OpenAI gym suite with the SAC algorithm. Values are averaged over 3 different seeds. A bigger network performs better in terms of learning speed and performance measured in terms of returns. Each epoch on the x-axis corresponds to 1000 environment steps.}
\end{figure*}

\section{Additional Plots}

\begin{figure*}[h]
    \centering
    \includegraphics[width=0.99\textwidth]{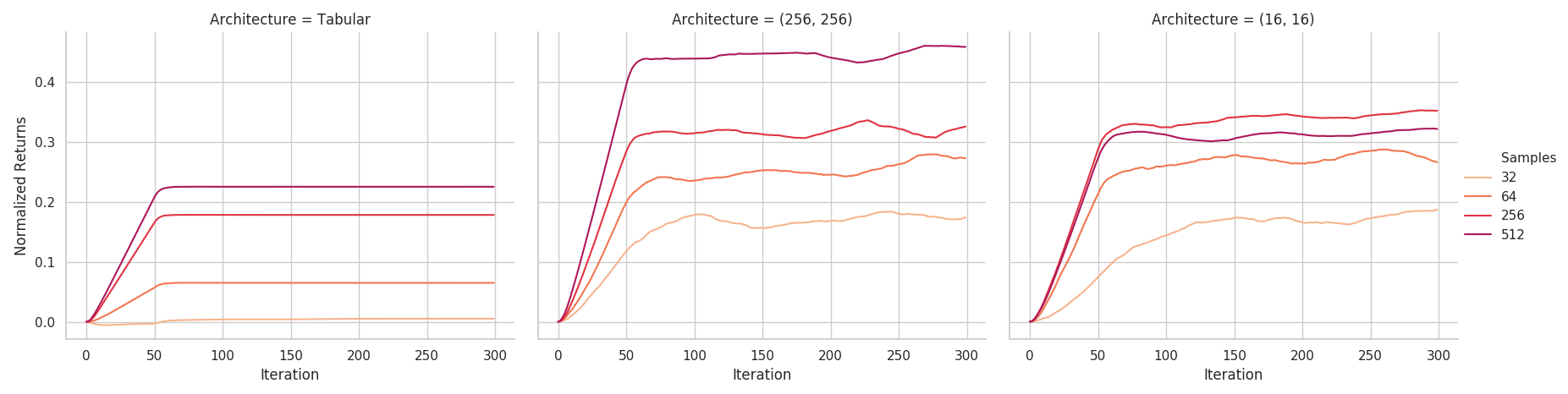}
\caption{\label{fig:sampling_arch_sweep} Normalized returns with Sampled-FQI, varying over architectures and number of on-policy samples.}
\end{figure*}

\vspace{-30pt}
\begin{figure*}[h]
    \centering
    \includegraphics[width=0.49\textwidth, scale=0.25]{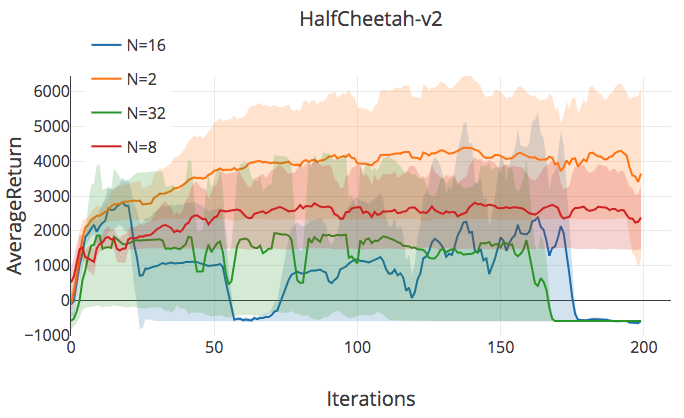}
    \includegraphics[width=0.49\textwidth, scale=0.25]{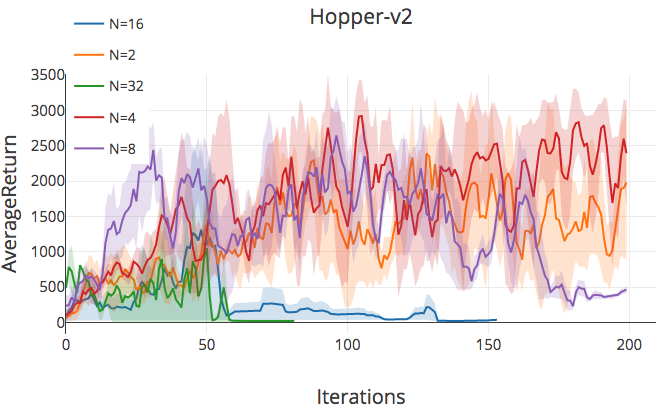}
\caption{\label{fig:td3_grad_sweep} Performance on Half Cheetah and Hopper trained via TD3 with replay buffer of size $2e4$ with increasing number of gradient steps taken per environment step ($N$) on the critic and the actor. Note the clearly observable decay in performance of the agent with more number of gradient steps -- which clearly validates our claim of the presence of overfitting in Q-functions. Each iteration on the x-axis corresponds to taking 5000 steps in the environment.}

\end{figure*}

\end{document}